\documentclass{article}

\usepackage[preprint]{neurips_2026}

\usepackage[utf8]{inputenc}
\usepackage[T1]{fontenc}
\usepackage{booktabs}
\usepackage{amsfonts}
\usepackage{nicefrac}
\usepackage{microtype}
\usepackage{xcolor}
\usepackage{amsmath,amssymb,amsthm,mathtools}
\usepackage{graphicx}
\usepackage{subcaption}
\definecolor{darkred}{RGB}{140,0,0}
\definecolor{darkblue}{RGB}{0,0,200}
\usepackage[breaklinks,colorlinks=true,citecolor=darkblue,linkcolor=darkred,urlcolor=darkred]{hyperref}
\usepackage{tcolorbox}
\usepackage{algorithm}
\usepackage{algpseudocode}
\usepackage{enumitem}

\theoremstyle{plain}
\newtheorem{theorem}{Theorem}
\newtheorem{lemma}{Lemma}
\newtheorem{proposition}{Proposition}

\newtheorem{definition}{Definition}
\theoremstyle{remark}

\labelformat{equation}{(#1)}

\newcommand{\Abar}{\bar{\mathcal A}}
\newcommand{\SA}{\mathcal S\times\mathcal A}
\newcommand{\E}{\mathbb E}
\newcommand{\1}{\mathbf 1}
\newcommand{\norm}[1]{\left\lVert #1 \right\rVert}

\title{Adaptive state-action abstractions via rate-distortion}

\author{
  Fernando E.~Rosas\\
  Department of Informatics, University of Sussex\\
  Department of Brain Science, Imperial College London\\
  Centre for Eudaimonia and Human Flourishing, University of Oxford\\
  \texttt{f.rosas@sussex.ac.uk} \\
}

\begin{document}
\maketitle

\begin{abstract}
When learning to walk, infants seem to address a coarse version of the problem first~---~stay upright, reach the caregiver~---~and refine it only when further practice at that resolution stops paying off. 
Reinforcement learning offers multiple techniques for building simple versions of complex tasks, but lacks general principles for how to dynamically adjust the granularity of these abstractions during learning. 
This paper proposes one such principle: \emph{refine the abstraction as soon as the learning error within it becomes comparable to the error induced by the abstraction itself}. 
Here, we investigate one way of formalising this principle via a performance certificate that decomposes value error into two terms: a learning error bound captured by a Bellman residual, and an abstraction error bound given by a bisimulation metric.
The resulting switching strategy is implemented by soft state-action abstractions built from rate-distortion principles,
whose resolution along state and action axes can be continuously adjusted.
We validate this construction in a range of tabular settings, showing that near-optimal performance can be achieved under substantial lossy compression of state and action information.
\end{abstract}

\section{Introduction}

Good control rarely requires tracking every detail of the environment~\citep{konidaris2019necessity,ho2019value,abel2019theory,allen2023structured}.
For instance, task-irrelevant symmetries can be factored away without hindering the ability of a reinforcement-learning agent to find an optimal policy~\citep{ravindran2004algebraic,van2020mdp}. 
More generally, useful abstractions act like sufficient statistics: they preserve what matters the most and discard distinctions whose effects are unimportant.

But how should an agent decide what information to keep and which to discard?
Classical abstraction theory gives exact answers to this question in special cases.
State bisimulation and MDP homomorphisms identify states that are perfectly equivalent from a planning perspective~\citep{givan2003equivalence,ravindran2004algebraic,li2006towards}.
Bisimulation metrics extend these conditions to quasi-symmetry and almost-bisimilarity providing quantitative performance guarantees~\citep{ferns2004metrics,ferns2011bisimulation,taylor2008bounding,abel2016near}.
These ideas are closely related to lossy compression, which has been formally studied under rate-distortion theory~\citep{shannon1959coding} and the information bottleneck method~\citep{tishby2000information}.

While the literature offers a rich account of \emph{how to build} abstractions, comparatively little is known about \emph{how to adjust} their granularity during learning.
Existing adaptive methods refine abstractions based on the distinguishability of transition estimates or the discovery of verification counterexamples~\citep{ortner2013adaptive,clarke2000cegar,abate2024bisimulation,coppola2025existence}.
However, to my knowledge, no existing adaptive method handles state-action abstraction jointly.
This is not an incidental restriction: state aggregation can factor invariances (i.e., distinctions that do not matter) but fails to capture equivariances, in which the same behavioural function carries different action labels depending on the context~\citep{van2020mdp,taylor2008bounding}.
Equivariance is often the rule rather than the exception --- for instance, in bilateral locomotion the action that functions as `advance the free leg' maps to a right-leg command when the left leg is planted, and to a left-leg command when the right leg is planted.

This paper introduces a formal framework to dynamically adjust the resolution of state-action abstractions. 
Our central technical result is a value-error decomposition that splits the total error into a Bellman residual measuring how well the current abstract problem has been solved, and an abstraction error bounded by a bisimulation metric. 
The decomposition motivates a simple adaptive rule: switch to a finer abstraction as soon as the Bellman residual reaches the scale of the abstraction error. 
We formalise this rule by building a continuous family of soft abstractions through rate-distortion, indexed by a temperature parameter that controls resolution. 
We validate the framework on classic tabular control benchmarks and a \texttt{SysAdmin}~\citep{guestrin2003efficient}
scaling test. 
Results show that the adaptive rule traces meaningful compression-distortion frontiers and achieves near-optimal performance under substantial state and action compression.
The framework further reveals whether state, action, or joint state-action information is the binding constraint on performance for a given task.

Together, these findings yield an adaptive method that refines abstractions only as far as a task requires, and in doing so quantifies how much state and action information that task actually contains. Overall, the main contributions are:
\begin{itemize}[leftmargin=2.5em]
\item \emph{Soft state-action abstractions}. A stochastic generalisation of MDP homomorphisms, equipped with bisimulation metrics that compare state-action pairs.
\item \emph{A learning-abstraction decomposition}. A control certificate that decomposes value error into a Bellman residual within the abstraction and bisimulation distortion of the abstraction itself.
\item \emph{The adaptive abstraction principle}. A refinement rule that ties abstraction granularity to learning progress --- refine only when the residual within the current abstraction reaches the scale of the abstraction error.
\item \emph{State-action abstraction built from rate-distortion principles.} A continuous family of soft abstractions in which the information allocated to states and actions is independently controllable.
\item \emph{A decomposition of task compressibility}. The separable rates make it possible to attribute a task's compressibility to its states, actions, or their interaction.
\end{itemize}

\section{Related Work}
\label{sec:related}

\paragraph{Bisimulation and MDP homomorphisms.}
Abstractions have been studied via state bisimulation~\citep{givan2003equivalence} and MDP homomorphisms
\citep{ravindran2004algebraic}, which give exact conditions under which the value function of a reduced MDP can be lifted to the original one~\citep{li2006towards}.
These ideas have been used to build state-action abstractions that account for equivariant symmetries~\citep{van2020mdp} and have been combined with the options framework~\citep{abel2020value}.
This work extends MDP homomorphisms~\citep{ravindran2004algebraic,van2020mdp} from 
hard state-action abstractions based on coarse-grainings to soft abstractions based on stochastic encoders and decoders, whose granularity is dynamically adjusted during planning.

\paragraph{Bisimulation metrics.}
Work on approximate abstraction generalises exact bisimulation, allowing quantitative error and providing performance bounds~\citep{taylor2008bounding,abel2016near,jiang2015abstraction}.
\citet{ferns2004metrics} introduced state bisimulation pseudometrics as
quantitative relaxations of exact bisimulation (see also \citet{ferns2014bisimulation}).
Approximate homomorphism and lax bisimulation-style metrics extend this by comparing actions through matchings rather than fixed labels
\citep{taylor2008bounding,zhao2022continuous}. 
A survey on abstraction guarantees and approximation-estimation tradeoffs can be found in \citet{jiang2018notes}. 
Also, policy-conditioned relaxations~\citep{castro2020scalable,zhao2022continuous,panangaden2024policy} enable ways to account for equivalence during behaviour.
In deep RL, bisimulation-inspired losses are often optimised from samples~\citep{gelada2019deepmdp,zhang2021learning,castro2021mico,kemertas2021towards,agarwal2021contrastive}. Our work extends bisimulation metrics to soft state-action abstractions, allowing stochastic encoders to compress `quasi-equivariant' relationships.

\paragraph{Information-theoretic compression.}
\citet{abel2019state} formulate state abstraction as lossy compression in apprenticeship learning, and use a Blahut--Arimoto scheme to solve the resulting rate-distortion problem; \citet{biza2021learning} use a variational information bottleneck over state-action pairs to learn discrete state abstractions; and \citet{delgrange2022distillation} distil policies into variational MDP
abstractions. Related work studies deterministic bottlenecks \citep{xu2022wdibs}, bottleneck regularisation for exploration and transfer
\citep{goyal2019infobot,igl2019generalization,clauw2025ibtransfer}, and variational objectives whose optima recover bisimulation relations \citep{freed2025vibes}.
More generally, information-theoretic principles have been used to ask what an agent should learn~\citep{arumugam2021deciding,arumugam2021value,arumugam2022value,arumugam2022between,arumugam2022rate}, and bounded-rational control has studied related tradeoffs between information and reward
\citep{polani2009information,tishby2011information,rubin2012trading}.
Our framework builds abstractions using the rate-distortion principle while introducing key differences in the object and the distortion: it compresses state-action control structure rather than only states, and uses bisimulation metrics as distortion rather than expert demonstration.

\paragraph{Adaptive refinement.}
Incremental abstraction takes place during partition refinement when computing bisimulation
quotients \citep{givan2003equivalence}, as well as in earlier model-reduction heuristics \citep{dean1997model}, but this has not been used for online calibration. 
\citet{ortner2013adaptive} has studied online state aggregation based on statistical uncertainty about rewards and transitions during learning. 
Methods such as CEGAR \citep{clarke2000cegar}, data-driven bisimulation learning
\citep{abate2024bisimulation}, and multi-resolution bisimulation constructions
\citep{coppola2025existence} also refine abstractions when a behavioural or verification condition fails.
Our refinement procedure is qualitatively different: we refine neither based on transition estimates nor on verification counterexamples, but instead focus on whether the Bellman residual of the abstract planner reaches the error induced by the abstraction.

\section{Soft state-action abstractions}
\label{sec:soft}

Consider a MDP $(\mathcal S,\mathcal A,P,r,\gamma)$, with finite state space $\mathcal S$, finite action space $\mathcal A$, transition kernel $P(s'|s,a)$, reward $r(s,a)$, and discount factor $\gamma\in[0,1)$~\citep{SuttonBarto2018}.
For a bounded value function $V:\mathcal S\to\mathbb R$, the one-step backup operator is
\begin{equation}
(BV)(s,a)
:=
r(s,a)+\gamma\E_{s'\sim P(\cdot\mid s,a)}\big[V(s')\big].
\label{eq:b-backup}
\end{equation}
The Bellman optimality operator is
$(TV)(s):=\max_{a\in\mathcal A}(BV)(s,a)$.

In many situations the MDP may be too hard to solve directly. 
A common way to simplify the problem is to
compress states, actions, or state-action pairs via coarse-graining mappings. 
Instead, here we consider a more general approach based on
\emph{stochastic coarse-grainings} specified by
\begin{align}
    \nu_S:\mathcal S\to\Delta(\bar{\mathcal S})
    \quad\text{(state encoder),}\quad
    \nu_A:\mathcal S\times\mathcal A\to\Delta(\bar{\mathcal A})
    \quad\text{(action encoder),}
    \\
    \eta_S:\bar{\mathcal S}\to\Delta(\mathcal S)
    \quad\text{(state decoder),}\quad
    \eta_A:\bar{\mathcal S}\times\bar{\mathcal A}\to\Delta(\mathcal A)
    \quad\text{(action decoder),}
\end{align}
where $\bar{\mathcal S}$ and $\bar{\mathcal{A}}$ are sets of abstract states and actions. 
We denote the joint encoder and decoder as
$\nu(\bar s,\bar a|s,a) := \nu_S(\bar s|s) \,\nu_A(\bar a|s,a)
\quad\text{and}\quad
\eta(s,a|\bar s,\bar a) := \eta_S(s|\bar s)\,\eta_A(a|\bar s,\bar a)$.

\begin{definition}
Given an MDP $(\mathcal S,\mathcal A,P,r,\gamma)$, the \textbf{soft state-action abstraction} induced by $\nu_S$, $\nu_A$, $\eta_S$, and $\eta_A$ is the MDP $\big(\bar{\mathcal S},\bar{\mathcal A}(\bar s),\bar P,\bar r,\gamma\big)$ with rewards and dynamics given by
\begin{align}
\bar r(\bar s,\bar a)
&:=
\sum_{s\in\mathcal S}\sum_{a\in\mathcal A}
r(s,a)\,\eta(s,a|\bar{s},\bar{a})
\quad\text{and}
\label{eq:soft-sa-reward}
\\
\bar P(\bar s'\mid \bar s,\bar a)
&:=
\sum_{s\in\mathcal S}\sum_{a\in\mathcal A}\sum_{s'\in\mathcal S}
\nu_S(\bar s'|s')\,P(s'\mid s,a)\,\eta(s,a|\bar s,\bar a),
\label{eq:soft-sa-kernel}
\end{align}
and
$\bar{\mathcal A}(\bar s)
:=
\left\{
\bar a\in\bar{\mathcal A} :
\exists a\in\mathcal A, \exists s\in\mathcal S, \nu(\bar s,\bar a|s,a)>0\right\}$
being the admissible actions at $\bar s\in\bar{\mathcal S}$.
\end{definition}

Soft state-action abstractions generalise state bisimulation and other
approaches to build abstractions (see \autoref{app:soft-sa-bisim}).
The Bellman optimality operator of a soft state-action abstraction is
\begin{equation}
\big(\bar T\bar V\big)(\bar s)
:=
\max_{\bar a\in\Abar(\bar s)}
\Big\{
\bar r(\bar s,\bar a)
+\gamma\E_{\bar s'\sim\bar P(\cdot\mid\bar s,\bar a)}\big[\bar V(\bar s')\big]
\Big\}.
\label{eq:closed-abstract}
\end{equation}
The encoder and decoder induce a `lifting' (from concrete to abstract) and `grounding' (from abstract to concrete) operators
\begin{equation}
\big(LV\big)(\bar s)=\sum_{s\in\mathcal S} V(s)\eta_S(s\mid\bar s)
\quad\text{and}\quad
\big(\Gamma\bar V\big)(s)=\sum_{\bar s\in\bar{\mathcal S}}\bar V(\bar s)\nu_S(\bar s\mid s).
\label{eq:lift-ground-value}
\end{equation}

Soft abstractions are a stochastic extension of traditional MDP homomorphisms~\citep{ravindran2004algebraic,van2020mdp}, which correspond to a deterministic encoder
    $\nu(\bar s,\bar a\mid s,a)=\1\{f(s,a)=(\bar s,\bar a)\}$
    and decoder
    $\eta(s,a\mid\bar s,\bar a)=\1\{g(\bar s,\bar a)=(s,a)\}$,
where
\begin{equation}
    f(s,a)=\big(f_S(s),f_A(s,a)\big)
    \quad\text{with}\quad
    f_S:\mathcal S\to\bar{\mathcal S}
    \quad\text{and}\quad
    f_A:\mathcal S\times\mathcal A\to\bar{\mathcal A}
\end{equation}
are coarse-graining functions, and $g(\bar s,\bar a)=\big(g_S(\bar s),g_A(\bar s,\bar a)\big)\in\mathcal S\times\mathcal A$ is a representative selector.
For this scenario, the abstract MDP has
rewards and transitions given by
\begin{equation}
\bar r(\bar s,\bar a)=r\big(g(\bar s,\bar a)\big)
\quad\text{and}\quad
\bar P\big(\bar s' \mid \bar s,\bar a\big)
=
\sum_{s'\in\mathcal S:f_S(s')=\bar s'}
P\big(s'\mid g(\bar s,\bar a)\big).
\end{equation}
The lifting and grounding operators become
$\big(L V)(\bar s) = V\big(g_S(\bar s)\big)$
and
$\big(\Gamma\bar V\big)(s) = \bar V\big(f_S(s)\big)$.
For a more extended comparison with other abstraction frameworks, see \autoref{app:soft-sa-bisim}.

\section{Learning and abstraction errors}
\label{sec:bounds}

\subsection{Assessing distortion between state-action pairs}

To compare state-action pairs, it is useful to consider distortion pseudometrics\footnote{Pseudometrics are symmetric and satisfy the triangle inequality, but may give $d(x,y)=0$ for $x\neq y$.} over $\mathcal S\times\mathcal A$ that measure how different two state-action pairs are for planning purposes.

\begin{definition}
A distortion $d_{\mathcal V}$ is said to be \textbf{Bellman-compatible} with respect to a collection of value functions $\mathcal V$ if it satisfies
\begin{equation}
d_{\mathcal V}\big((s,a),(s',a')\big)
\geq
\sup_{V\in\mathcal V}
\big|
(BV)(s,a)-(BV)(s',a')
\big|.
\label{eq:v-bellman-compatible}
\end{equation}
\end{definition}

The tightest Bellman-compatible distortion for a given class of value functions $\mathcal V$ is often of the form~~\citep{ferns2004metrics,ferns2011bisimulation} 
\begin{equation}
d_{\rho}\big((s,a),(s',a')\big)
:=
|r(s,a)-r(s',a')|
\;+\;
\gamma\,W_{1,\rho}\Big(P(\cdot\mid s,a),P(\cdot\mid s',a')
\Big),
\label{eq:v-distortion-upper-bound}
\end{equation}
where $W_{1,\rho}$ is the Wasserstein/Kantorovich metric with 
base metric $\rho$. 
For instance, $d_{\rho_\text{ind}}$ with $\rho_\text{ind}(s,u)=\1\{s=u\}$ 
(so that $W_{1,\rho_\text{ind}}$ is the total variation distance) 
is the tightest Bellman-compatible distortion for bounded value functions $\mathcal{V}=\{V:\|V\|_\infty\le 1\}$. 
Another important distortion is given by the fixed-point construction 
by \cite{ferns2011bisimulation}, which satisfies $\rho^*=d_{\rho^*}$. 
The partition induced by the zeros of Ferns' distortion 
corresponds to the minimal bisimulation of the MDP. 
For a discussion of this and other distortions, see \autoref{app:value-compatible-distortions}.

Distortion can be used to estimate the effect of errors induced by the simplifications associated with an abstract MDP. Consider the composition of the lifting and grounding operators, as defined in \autoref{eq:lift-ground-value}, which gives the `round-trip' operator $K:=\Gamma L$ with kernel
\begin{equation}
\kappa(s',a'\mid s,a)
=
\sum_{\bar s\in \bar{\mathcal S}}\sum_{\bar a\in\bar{\mathcal A}}\eta(s',a'\mid\bar s,\bar a)\nu(\bar s,\bar a\mid s,a).
\label{eq:pair-round-trip}
\end{equation}
By marginalising actions, the state round-trip kernel is
$\kappa_S(s'\mid s)
:=
\sum_{\bar s\in \bar S}\eta_S(s'\mid\bar s)\nu_S(\bar s\mid s)$.
Then, the expected distortion introduced by the round-trip can be measured by
\begin{equation}
D_\mathrm{E}(s,a)
:=
\mathbb E_{(S',A')\sim \kappa(\cdot\mid s,a)}
\Big[
d_\mathcal{V}\big((s,a),(S',A')\big)\Big].
\label{eq:roundtrip-defect}
\end{equation}

\subsection{Decomposing errors in optimal value estimation}
\label{sec:bound_and_principle}

We now use the above ideas to derive a decomposition of value error in terms of learning and abstraction effects. For this, we first introduce a new operator that will be useful for our derivations.
\begin{definition}
For a given encoder $\nu$ and decoder $\eta$, the corresponding \textbf{deformed Bellman operator} is
$\bar T^\dag
:=
LT\Gamma$,
which acts on abstract value functions $\bar V$ as
$(\bar T^\dag\bar V)(\bar s)
=
\sum_{s\in\mathcal S}\eta_S(s\mid \bar s)\,(T\Gamma\bar V)(s)$.
\end{definition}
While $\bar T^\dag$ and $\bar T$ act on the same space and use the same state encoder and decoder, they differ in whether the maximum is taken before or after the lifting. 
That said, they do have some similar properties, as shown next.

\begin{lemma}
$\bar T^\dag$ is a $\gamma$-contraction in $\|\cdot\|_\infty$
and hence has a unique fixed point. (Proof in App.~\ref{proof:fixed_point})
\label{lemma:T_eta_fixedpoint}
\end{lemma}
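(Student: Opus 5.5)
The plan is to write $\bar T^\dag = LT\Gamma$ as a composition of three maps and bound the Lipschitz constant of each in the sup norm. Then apply Banach's fixed point theorem on the complete space $\mathbb R^{\bar{\mathcal S}}$ (finite-dimensional if $\bar{\mathcal S}$ is finite, and otherwise the Banach space of bounded functions) with $\|\cdot\|_\infty$.

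\textbf{Step 1: the grounding operator is nonexpansive.} For any $s$, $(\Gamma\bar V)(s)-(\Gamma\bar W)(s)=\sum_{\bar s}\nu_S(\bar s\mid s)\big(\bar V(\bar s)-\bar W(\bar s)\big)$. Since $\nu_S(\cdot\mid s)$ is a probability distribution, this is a convex combination. Hence $\|\Gamma\bar V-\Gamma\bar W\|_\infty\le\|\bar V-\bar W\|_\infty$. The same argument applied to $L$, using the distribution $\eta_S(\cdot\mid\bar s)$, gives $\|LV-LW\|_\infty\le\|V-W\|_\infty$. Both operators map bounded functions to bounded functions.

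\textbf{Step 2: $T$ is a $\gamma$-contraction on bounded $V:\mathcal S\to\mathbb R$.} This is the standard argument. First, $|(BV)(s,a)-(BW)(s,a)|=\gamma\,|\E_{s'\sim P(\cdot\mid s,a)}[V(s')-W(s')]|\le\gamma\|V-W\|_\infty$. Second, $|\max_a f(a)-\max_a g(a)|\le\max_a|f(a)-g(a)|$ for any $f,g$ on the finite set $\mathcal A$. Together these give $\|TV-TW\|_\infty\le\gamma\|V-W\|_\infty$.

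\textbf{Step 3: compose and conclude.} Chaining the three bounds gives
\[
\|\bar T^\dag\bar V-\bar T^\dag\bar W\|_\infty\le\|T\Gamma\bar V-T\Gamma\bar W\|_\infty\le\gamma\|\Gamma\bar V-\Gamma\bar W\|_\infty\le\gamma\|\bar V-\bar W\|_\infty .
\]
Since $\gamma<1$ and the space of bounded abstract value functions with the sup norm is complete, Banach's theorem yields a unique fixed point. It also shows that iterating $\bar T^\dag$ from any starting point converges to this fixed point geometrically.

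There is no real obstacle here. The only point requiring care is the order of operations: the maximum in $T$ sits between the two averaging operators. This is harmless, because the max-difference inequality is pointwise in $s$ and the averaging by $\eta_S$ happens afterwards. Boundedness of $r$, which holds automatically for finite $\mathcal S\times\mathcal A$, ensures that $\bar T^\dag$ maps bounded functions to bounded functions.
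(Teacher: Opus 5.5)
Your proposal is correct and matches the paper's proof essentially step for step. Both arguments use the same three ingredients: nonexpansiveness of $L$ and $\Gamma$ as averaging operators (the paper's Lemma~\ref{lemma:small}), the $\gamma$-contraction of $T$ via the max-difference inequality (Lemma~\ref{lemma:bellman_contraction}), and the identical chain of three inequalities followed by Banach's fixed-point theorem on the complete space of bounded abstract value functions.
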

As a second ingredient, consider the geometry of the state-action space induced by the distortion $d$. One way to assess it is to calculate its diameter $\Delta_\text{max}:=\sup_{s,a}\sup_{s',a'} d\big((s,a),(s',a')\big)$.
A subtler way to assess this is given next.
\begin{definition}
\label{def:abstraction_error}
The \textbf{abstraction error bound} is given by
\begin{equation}
\Delta_\mathrm{H}(\kappa_S;d)
:=
\sup_{s\in\mathcal S}
\mathbb E_{S'\sim \kappa_S(\cdot\mid s)}
\big[
D_\mathrm{H}(s,S')
\big],
\label{eq:v-hausdorff-defect}
\end{equation}
where $D_\mathrm{H}(s,u):=
\max\big\{
\sup_{a}\inf_{b}d\big((s,a),(u,b)\big),
\;
\sup_{b}\inf_{a}d\big((s,a),(u,b)\big)
\big\}$.
\end{definition}

The Hausdorff distance $D_\text{H}(s,u)$ provides a tighter estimate than state bisimulation distance $D_B(s,u):=\max_a d\big((s,a),(u,a)\big)$ \citep{taylor2008bounding,taylor2008lax}, as it compares each action with its best matching action at the other state.
For deterministic abstractions, the abstraction error bound reduces to
$\Delta_\text{H}(\kappa_S;d) = \sup_{s\in\mathcal S} D_H(s,g_S\circ f_S(s))$.

With all this in place, we can now formulate the following bound.

\begin{theorem}[Learning--abstraction decomposition]
\label{teo:main}
If $\Gamma\bar V\in\mathcal V$, then
\begin{equation}
\norm{\Gamma\bar V-V^*}_\infty
\le
\frac{1}{1-\gamma}
\Big[
\underbrace{\|\bar T^\dag\bar V-\bar V\|_\infty}_{\text{learning error}}
+
\underbrace{\Delta_H(\kappa_S;d_{\mathcal V})}_{\text{abstraction error}}
\Big].
\label{eq:main-bound}
\end{equation}
(Proof in Appendix~\ref{app:proof_theorem}).
\end{theorem}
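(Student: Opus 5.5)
The plan is to use the standard residual-to-error inequality for the Bellman optimality operator $T$ on the concrete MDP, applied to the grounded function $W:=\Gamma\bar V$. Since $T$ is a $\gamma$-contraction in $\|\cdot\|_\infty$ with fixed point $V^*$, we have
\[
\|W-V^*\|_\infty\le \|W-TW\|_\infty+\|TW-TV^*\|_\infty\le \|W-TW\|_\infty+\gamma\|W-V^*\|_\infty,
\]
so $\|W-V^*\|_\infty\le \frac{1}{1-\gamma}\|W-TW\|_\infty$. It then suffices to show that $\|\Gamma\bar V-T\Gamma\bar V\|_\infty\le \|\bar T^\dag\bar V-\bar V\|_\infty+\Delta_H(\kappa_S;d_{\mathcal V})$.

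To get this, I would insert the intermediate term $\Gamma\bar T^\dag\bar V=\Gamma L T\Gamma\bar V=K\,TW$ and use the triangle inequality:
\[
\|W-TW\|_\infty\le \|\Gamma\bar V-\Gamma\bar T^\dag\bar V\|_\infty+\|K\,TW-TW\|_\infty .
\]
For the first term, $\Gamma$ averages against the probability vector $\nu_S(\cdot\mid s)$, so it is non-expansive in sup norm. This gives the learning error $\|\bar V-\bar T^\dag\bar V\|_\infty$.

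The second term is the main step, and it is where the hypothesis $W\in\mathcal V$ enters. For each $s$,
\[
(K\,TW)(s)-(TW)(s)=\mathbb E_{S'\sim\kappa_S(\cdot\mid s)}\big[(TW)(S')-(TW)(s)\big].
\]
Hence it is enough to prove the pointwise Lipschitz-type bound $|(TW)(u)-(TW)(s)|\le D_H(s,u)$ for all $s,u$. Take $a$ attaining $\max_a (BW)(s,a)$. For any action $b$,
\[
(TW)(s)-(TW)(u)\le (BW)(s,a)-(BW)(u,b)\le d_{\mathcal V}\big((s,a),(u,b)\big),
\]
using Bellman-compatibility of $d_{\mathcal V}$ together with $W\in\mathcal V$. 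Taking the infimum over $b$ and then the supremum over $a$ gives $(TW)(s)-(TW)(u)\le \sup_a\inf_b d_{\mathcal V}\big((s,a),(u,b)\big)$. The symmetric argument bounds $(TW)(u)-(TW)(s)$ by the other half of the Hausdorff maximum. Averaging over $S'\sim\kappa_S(\cdot\mid s)$ and taking $\sup_s$ yields exactly $\Delta_H(\kappa_S;d_{\mathcal V})$.

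Combining the two bounds with the residual inequality gives \eqref{eq:main-bound}. The only delicate point is the order of the $\max$/$\inf$ manipulations in the Hausdorff step; everything else is contraction and averaging. Finiteness of $\mathcal A$ guarantees that the maxima are attained, so that step goes through without approximation arguments.
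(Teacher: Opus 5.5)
Your proposal is correct and follows essentially the same route as the paper. Both arguments use the same four steps: the residual bound $\|W-V^*\|_\infty\le(1-\gamma)^{-1}\|TW-W\|_\infty$ at $W=\Gamma\bar V$; a triangle inequality through $\Gamma\bar T^\dag\bar V=\Gamma L T\Gamma\bar V$; non-expansiveness of $\Gamma$ for the learning term; and the Hausdorff bound $|(TW)(s)-(TW)(u)|\le D_H(s,u)$, averaged over $\kappa_S(\cdot\mid s)$, for the abstraction term. In the Hausdorff step, you fix a maximiser at $s$ and use $(TW)(u)\ge(BW)(u,b)$ for every $b$, which is a slightly cleaner one-sided version of the paper's case-split computation with the same content.
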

\autoref{teo:main} reveals that errors made while performing optimal value estimation on an abstract MDP can be decomposed into two types: errors within the abstract MDP, and errors due to the abstraction itself. This leads to a simple yet powerful idea:

\begin{tcolorbox}[colback=gray!10, colframe=black!60, before upper={\parindent12pt}]
\noindent
\textbf{Adaptive abstraction principle}: Begin learning within a coarse abstraction, and refine it as soon as the learning error becomes smaller than the abstraction error.
\end{tcolorbox}

The next section provides a formalisation of this principle using rate-distortion theory.

\section{Adaptive abstractions based on rate-distortion theory}
\label{sec:rd}

We now exploit the adaptive abstraction principle by building a family of abstractions of increasing degree of granularity using rate-distortion theory.

\subsection{Building abstractions from rate-distortion theory}

To use rate-distortion theory, one first needs to specify a suitable distortion function. While $\Delta_H$ would be an attractive candidate, Hausdorff metrics are often very expensive to calculate.
Following standard rate-distortion practice, let us instead take $\mu\in\Delta(\SA)$ to be a replay, design, or
occupancy measure, and consider the $\mu$-averaged distortion
\begin{equation}
\overline\Delta_\mu(\nu,g)
:=
\mathbb E_{(S,A)\sim\mu,\,(\bar S,\bar A)\sim\nu(\cdot\mid S,A)}
\Big[d_{\mathcal V}\big((S,A),g(\bar S,\bar A)\big)\Big].
\label{eq:mu-decoder-defect}
\end{equation}
\autoref{app:mu-vs-uniform} shows under what conditions a $\mu$-averaged distortion can still assess the worst-case. Additionally, Appendix~\ref{app:deterministic_decoder} shows that the choice of having a deterministic decoder in \autoref{eq:mu-decoder-defect} does not imply lack of generality.

The resulting rate-distortion problem is as follows:
\begin{equation}
    \min_{\nu,g} \overline\Delta_\mu(\nu,g)
\quad \text{subject to}\quad
I_{\mu}(\bar S;S) \leq c_1
\quad\text{and}\quad
I_\mu(\bar A;S,A\mid \bar S) \leq c_2,
\end{equation}
where $c_1$ and $c_2$ are bounds on the amount of information allowed to go into abstract states and actions, respectively.
A Lagrangian formulation of the problem is
\begin{equation}
\min_{\nu,g}
\Big\{
I_{\mu}(\bar S;S)
+
\lambda I_\mu(\bar A;S,A\mid \bar S)
+
\beta \overline\Delta_\mu(\nu,g)
\Big\}.
\label{eq:rd-objective}
\end{equation}
Here, small $\beta$ emphasises compression and therefore yields coarse soft
abstractions; large $\beta$ emphasises distortion and therefore yields
finer ones.
Moreover, $I_{\mu}(\bar S;S)$ and $I_{\mu}(\bar A;S,A|\bar S)$ disentangle the amount of information about the state and action retained by the encoder, respectively. Large $\lambda$ encourages compressing action information, and $\lambda=1$ means that state and action bits are weighted equally; indeed $I_{\mu}(\bar S;S)+ I_\mu(\bar A;S,A\mid \bar S) = I(\bar S,\bar A;S,A)$.

Several familiar abstraction frameworks appear as constrained or limiting cases
of \autoref{eq:rd-objective}:
\begin{itemize}[leftmargin=3.5em]
\item If the encoder $\nu$ is deterministic, then the objective reduces
to a hard state-action partition. In the zero-distortion
limit it recovers state bisimulation and MDP homomorphisms.

\item If actions are preserved, then only the state encoder is learned and the
objective then reduces to classical state abstraction.

\item If $\lambda=0$, the action encoder can perform state-dependent action matching, yielding an analogue of lax bisimulation.

\item If $\beta\to\infty$ at fixed alphabet size, the rate term becomes
negligible and the problem approaches minimum-distortion clustering of
state-action pairs, analogous to $K$-medoids.
\end{itemize}
In contrast, varying $\beta$ and $\lambda$ turns static bisimulation clustering into an adaptive refinement procedure. For a detailed discussion on these comparisons, see \autoref{app:RD_generalises}.

\paragraph{Optimisation.} For given $\beta$, $\lambda$, abstract alphabets
$\bar{\mathcal S}$ and $\bar{\mathcal A}$, and distortion metric,
\autoref{eq:rd-objective} can be optimised by a generalised alternating Blahut--Arimoto scheme~\citep{blahut1972computation,arimoto1972algorithm}. The update alternates between refreshing the abstract
marginals, updating the state encoder with the current abstract-action marginal
frozen, updating the tied action encoder, and choosing blockwise decoder
representatives. The technical details are provided in \autoref{app:flat}.

\subsection{Rate-distortion for adaptive abstractions}
\label{sec:algorithm}

We can now exploit the adaptive abstraction principle (\autoref{sec:bound_and_principle}) using families of abstractions with varying granularity built via rate-distortion, indexed by the resolution parameter $\beta$.

\paragraph{Algorithm sketch.}
The resulting adaptive continuation method is:
\begin{enumerate}[leftmargin=3.5em]
\item Estimate a Bellman-compatible state-action distortion $d$. In practice, this may be via a lax-bisimulation surrogate, a MICo-style sample-based similarity, or another critic/model-based Bellman proxy \citep{castro2021mico,zhao2022continuous}.
\item Fix a finite abstract state alphabet $\bar{\mathcal S}$, abstract action
supports $\bar{\mathcal A}(\bar s)$, an initial decoded codebook
$g^{(0)}\in\mathcal G$, a conditional-rate multiplier $\lambda$, and a small
starting temperature $\beta_0$.
\item Solve the rate-distortion problem outlined in \autoref{eq:rd-objective} for $\beta=\beta_0$ using the methods described in \autoref{app:flat}, store the distortion $\bar \Delta_{\beta_0}$, and build or update the value operator $\bar T^\dag$.
\item Plan or learn on the current abstract problem while monitoring the  error $\|\bar T^\dag(\bar V) - \bar V\|$.
\item When the learning error has fallen to the scale of the current $\bar \Delta_\beta$, increase $\beta$ to the next value on the continuation schedule, warm-start the encoders
and decoded codebook, and solve the new rate-distortion problem.
\item Stop when the desired value tolerance, compute budget, or memory budget is reached.
\end{enumerate}

The resulting sequence $\{(\nu_{S,\beta},\nu_{A,\beta},g_\beta)\}_{\beta}$ corresponds to a Pareto family of soft state-action abstractions with guarantees of minimal distortion $\overline\Delta_\mu$ for given amounts of state-action information.
Note that this does not necessarily result in a nested family of hard quotients.
If a hard abstraction is needed for interpretation or deployment, one can set $\beta\to\infty$ and instead regulate the abstraction by progressively decreasing the alphabet sizes $|\bar S|$ and $|\bar A|$, as developed by~\cite{slonim1999agglomerative}.

\section{Case studies}
\label{sec:experiments}

This section presents experiments used to evaluate the proposed framework for adaptive abstraction.
These experiments investigate three questions: 
whether the rate-distortion objective produces a usable compression-distortion frontier;
whether the adaptive rule can navigate this frontier during planning;
and how the selected abstractions compare with classic bisimulation and MDP homomorphisms.
All experiments use $\lambda=1$ and a simplified
flat rate-distortion solver described in \autoref{app:flat}; additional
environment and evaluation details are given in \autoref{app:experiments}.

\subsection{Tabular control benchmarks}

\begin{figure}[t]
\centering
\includegraphics{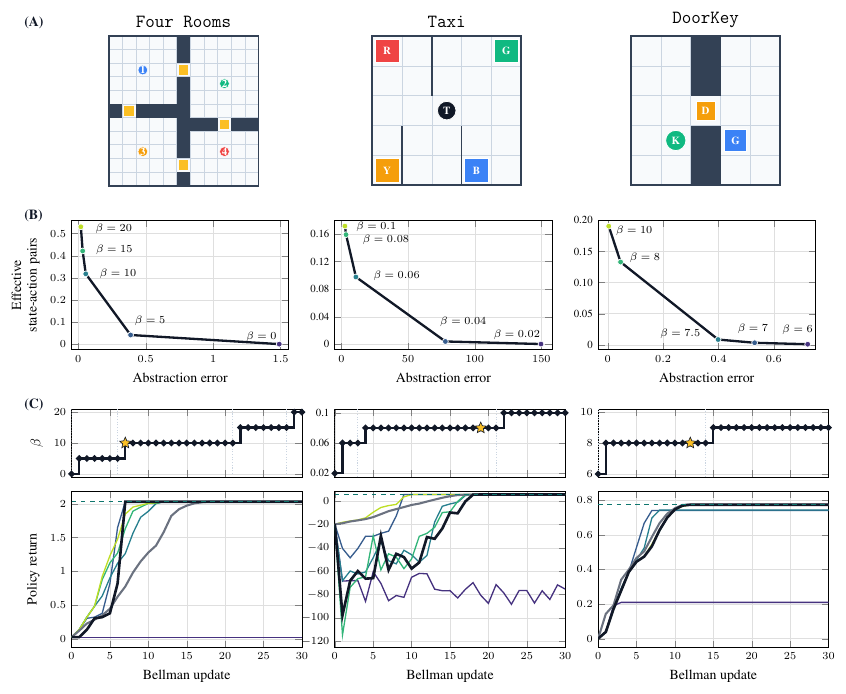}
\caption{
\small\textbf{Adaptive abstractions on tabular control benchmarks}.
\textit{Top}: description of the domains.
\textit{Middle}: compression-distortion frontiers; increasing $\beta$ reduces empirical abstraction error while increasing the  effective number of abstract state-action pairs.
\textit{Bottom}: adaptive trajectory of $\beta$ and policy-return traces.
Bold curves represent the adaptive approach and coloured curves correspond to fixed abstractions. Dashed horizontal lines mark optimal performance.
}
\label{fig:toy}
\end{figure}

We first apply the adaptive abstraction scheme (\autoref{sec:algorithm}) to
three well-known control benchmarks: \texttt{Four Rooms}~\citep{sutton1999between},
\texttt{Taxi}~\citep{dietterich2000hierarchical}, and a fully observable
\texttt{DoorKey} variant inspired by \texttt{MiniGrid}~\citep{chevalier2023minigrid} (see \autoref{fig:toy}).
The domains were chosen because they expose different forms of compressible
control structure. \texttt{Four Rooms} is a navigation problem with spatial
regularity and bottlenecks, so it is used to test state compression. \texttt{Taxi} adds object-oriented task variables,
where pickup and dropoff are relevant only in special contexts, making action
compression suitable. \texttt{DoorKey} combines navigation with a
task-phase dependency: the agent must acquire a key before opening the door and
reaching the goal, so useful compression involves both state identity and local
action relevance.

Results show that a rate-distortion family of abstractions can be successfully built for these three settings: increasing $\beta$ reduces abstraction
error while increasing the effective number of abstract state-action symbols (\autoref{fig:toy}).
Interestingly, the adaptive rule uses this frontier differently across tasks.
In \texttt{Four Rooms} and \texttt{DoorKey}, the scheme settles at an intermediate resolution that already recovers near-optimal performance. In contrast, the coarsest abstractions in \texttt{Taxi} are too lossy and performance improves only at abstractions of relatively high rate.

Our framework allows us to probe what kind of compression takes place in these scenarios.
Analyses at the first adaptive checkpoint whose policy attains optimal performance
show that the compression profiles match the structure of the tasks: \texttt{Four Rooms} compresses most heavily the state factor, \texttt{Taxi} compresses primarily actions, and \texttt{DoorKey} compresses both the task phase and the locally appropriate
action set (see \autoref{table1}).
Overall, adaptive abstractions were found to retain substantially less effective state-action information than the exact bisimulation and MDP homomorphisms.
This is not a contradiction:
bisimulation preserves full value information, whereas the adaptive
criterion allows errors that may be irrelevant for constructing the optimal policy.

\begin{table}[h]
\centering
\caption{Compression diagnostics}
\label{table1}
\begin{tabular}{lcccc}
\toprule
Domain & $|\mathcal S|\!\times\!|\mathcal A|$ & State bisimulation & MDP homomorphism  & Adaptive (state$\times$action)\\
\midrule
\texttt{FourRooms} & $1664$ & $1.0$ & $0.952$ & $0.318=0.368\times0.864$ \\
\texttt{Taxi} & $3006$ & $1.0$ & $0.333$ & $0.159=0.665\times 0.239$ \\
\texttt{DoorKey} & $1165$ & $1.0$ & $0.201$ & $0.133=0.373\times0.357$ \\
\bottomrule
\end{tabular}
\caption*{\footnotesize Compression rates are normalised by $|\mathcal S\!\times\!\mathcal A|$; the compression rate of the adaptive approach is calculated as $2^{I(\bar S,\bar A;S,A)}=2^{I(\bar S;S)}\times2^{I(\bar A;A,S|\bar S)}$, and subcomponents are normalised by $|\mathcal S|$ and $|\mathcal A|$, respectively. The adaptive point is the first checkpoint whose grounded policy attains the optimal performance. More details can be found in \autoref{app:experiments}.}
\end{table}

\subsection{SysAdmin scaling}

\begin{figure}[t]
\centering
\includegraphics{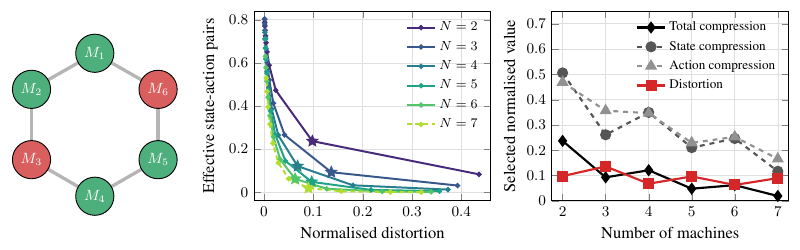}
\caption{
\small
\textbf{Adaptive abstractions on SysAdmin.}
\textit{Left}: Diagram of the scenario.
\textit{Middle}: Rate-distortion frontiers for various numbers of machines; stars mark the resolution at which the abstraction attains near-optimal performance.
\textit{Right}: Compression and distortion of the near-optimal policies as $N$ grows, with the effective size decomposed into state and action contributions.
}
\label{fig:sysadmin}
\end{figure}

We next use a ring-structured \texttt{SysAdmin} benchmark \citep{guestrin2003efficient}
to investigate how our approach scales with problem size.
In this benchmark, the goal is to keep a set of machines working (see~\autoref{fig:sysadmin}). The environment's state is a binary vector recording which
machines are operational; each timestep either does nothing or reboots one machine,
and failures are more likely when neighbouring machines are down (see Appendix~\ref{app:sysadmin}). Thus, a \texttt{SysAdmin} instance with $N$ machines has \(2^N\) states and \(N+1\) actions, and dynamics are generated by local ring interactions.
This makes the domain useful for testing whether our approach captures local control structure rather than memorising the full state-action specification.

Results show that larger systems exhibit more pronounced rate-distortion curves, which suggests that they allow for more compression with less distortion  (see \autoref{fig:sysadmin}).
Furthermore, the abstraction at which near-optimal performance (\(99\%\) of the full-resolution return) is achieved maintains
roughly stable normalised distortion, while retaining a decreasing fraction of
the concrete state-action information. This suggests that the learned codes effectively exploit the local ring structure of the dynamics rather than memorising global
configurations.

We also investigated to what degree the learned near-optimal
encoder provides human-readable ring features (for details of methodology, see Appendix~\ref{app:sysadmin}).
Results are shown in \autoref{tab:sysadmin-factor-representatives}.
On the state side, the largest mutual informations are
with local failure motifs and global load variables, such as triplet
histograms, pair histograms, failed count, and largest failure run.
On the action side, the strongest factors describe the local context of the target
machine, including its five-site neighbourhood, its failure-run size, and
whether it is isolated.
Overall, the probes suggest that abstractions are organised around local outage structure
and locally relevant interventions rather than generic low-rank compression.

\begin{table}[h]
\centering
\caption{Representative SysAdmin factor probes}
\label{tab:sysadmin-factor-representatives}
\begin{tabular}{llrr|llrr}
\toprule
Factor & Type & Bits & Relevance & Factor & Type & Bits & Relevance \\
\midrule
Triplet hist. & State & 1.196 & 0.442 & Target 5-bit & Action & 0.430 & 0.156 \\
Pair hist. & State & 1.153 & 0.447 & Target run size & Action & 0.301 & 0.155 \\
Failed count & State & 0.968 & 0.466 & Target index & Action & 0.289 & 0.140 \\
Max fail run & State & 0.818 & 0.402 & Isolated target & Action & 0.205 & 0.180 \\
\bottomrule
\end{tabular}
\caption*{\footnotesize Information reports the mean raw mutual information in bits across $N=2,\dots,7$. Relevance reports the mean normalised information. The left block lists state factors for $I(F;\bar S)$; the right block lists action factors for $I(F;\bar A\mid \bar S)$.}
\end{table}

\section{Conclusion}

What is the essential information that an agent needs to solve a task?
The approach presented here addresses this question by tuning abstraction granularity
according to two sources of error: the difficulty of solving an abstract problem
and the error introduced by the abstraction itself.
This idea is formalised using soft state-action abstractions built from rate-distortion
principles, which lead to the identification of a value error bound that distinguishes learning error as a Bellman residual and abstraction error as bisimulation distortion.
This yields a simple adaptive rule: plan at the current resolution while the residual dominates, and refine once it reaches the abstraction floor.

Experiments in tabular domains support this principle. In \texttt{Four
Rooms}, \texttt{Taxi}, \texttt{DoorKey}, and \texttt{SysAdmin},
the rate-distortion objective yielded a useful compression-vs-distortion trade-off, and
the adaptive rule selected abstractions that recovered optimal
performance while retaining substantially fewer effective state-action pairs.
Moreover, the separation of state and action information rates makes the
resulting abstractions diagnostically useful --- they indicate if compression
comes mainly from state irrelevance (as in \texttt{Four Rooms}), action matching (as in \texttt{Taxi}), or both (as in \texttt{DoorKey}).

This work focuses on tabular problems to provide a proof-of-principle of
the proposed framework that can cleanly highlight its fundamental principles.
A natural next step is to extend the adaptive
principle to model-free settings, where the learning term can be estimated from
sampled Bellman or TD errors and the abstraction term from sample-based
behavioural metrics or latent-MDP losses
\citep{gelada2019deepmdp,castro2021mico,zhang2021learning,agarwal2021contrastive}.
Another interesting extension would be to combine this framework
with options, which would connect state-action compression at a single decision step
with temporal abstraction over action sequences
\citep{sutton1999between,bacon2017optioncritic,abel2020value}.

Future work may also use adaptive rate-distortion abstractions to enhance the
interpretability of deep reinforcement learning agents.
Building upon the simplicity biases of deep learning systems~\citep{valle2018deep,huh2021low,shai2026transformers},
this could be used to track how
the effective state-action rate changes during training, and whether refinement
events align with improvements in performance.
This also connects to a long-standing observation in cognitive psychology and developmental neuroscience, which states that infants appear to navigate complex tasks by first solving coarse versions and refining only when the coarse solution stops being adequate~\citep{rosch1976basic,newport1990maturational,adolph2008learning}.
Whether AI systems or infants implement a rule similar to the one proposed in this work, or arrive at a similar coarse-to-fine trajectory through different mechanisms, is an empirical question that we hope this work may help to address.

\bibliographystyle{unsrtnat}

\newpage
\appendix

\section{Proofs}
\label{app:proofs}

\subsection{Proof of \autoref{lemma:T_eta_fixedpoint}}
\begin{proof}
  \label{proof:fixed_point}
For two abstract value functions $\bar V_1,\bar V_2$, the following holds:
\begin{align*}
\|\bar T^\dag\bar V_1-\bar T^\dag\bar V_2\|_\infty
&=
\|LT\Gamma\bar V_1-LT\Gamma\bar V_2\|_\infty\\
&\le
\|T\Gamma\bar V_1-T\Gamma\bar V_2\|_\infty\\
&\le
\gamma\|\Gamma\bar V_1-\Gamma\bar V_2\|_\infty\\
&\le
\gamma\|\bar V_1-\bar V_2\|_\infty.
\end{align*}
Above, the first and third inequalities are due to \autoref{lemma:small}, and
the second inequality follows from \autoref{lemma:bellman_contraction}.
This proves that $\bar T^\dag$ is a $\gamma$-contraction in $L^\infty$.

To conclude the proof, note that since the abstract state space is
finite, the space of bounded abstract value functions is complete, and Banach's
fixed-point theorem gives a unique fixed point.
\end{proof}

The above proof uses the following basic lemmas, whose proof we include for completeness.
\begin{lemma}
\label{lemma:small}
    The lifting and grounding operators satisfy
    \begin{equation}
        \|L V_1 - L V_2\|_\infty\leq \| V_1 - V_2\|_\infty
        \quad\text{and}\quad
        \|\Gamma\bar V_1 - \Gamma\bar V_2\|_\infty\leq \| \bar V_1 - \bar V_2\|_\infty.
    \end{equation}
\end{lemma}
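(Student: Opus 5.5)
The plan is to observe that $L$ and $\Gamma$ are Markov averaging operators: each output value is a convex combination of input values, with weights given by a probability distribution. Non-expansiveness in $\|\cdot\|_\infty$ then follows from linearity together with the triangle inequality. The argument is identical for both operators, so I will write it once for $L$ and then indicate the substitutions for $\Gamma$.

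For $L$, fix $\bar s\in\bar{\mathcal S}$. By linearity of \autoref{eq:lift-ground-value},
\[
(LV_1)(\bar s)-(LV_2)(\bar s)=\sum_{s\in\mathcal S}\big(V_1(s)-V_2(s)\big)\,\eta_S(s\mid\bar s).
\]
Taking absolute values and applying the triangle inequality bounds the right-hand side by
\[
\sum_{s}|V_1(s)-V_2(s)|\,\eta_S(s\mid\bar s)\le \|V_1-V_2\|_\infty\sum_s\eta_S(s\mid\bar s).
\]
The final sum equals $1$ because $\eta_S(\cdot\mid\bar s)\in\Delta(\mathcal S)$. Taking the supremum over $\bar s$ gives the first inequality.

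For $\Gamma$, fix $s\in\mathcal S$ and run the same computation with weights $\nu_S(\bar s\mid s)$, summing over $\bar s\in\bar{\mathcal S}$. These weights are nonnegative and sum to one because $\nu_S(\cdot\mid s)\in\Delta(\bar{\mathcal S})$. Taking the supremum over $s$ then gives the second inequality.

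There is no substantive obstacle here. The only points needing care are, first, that nonnegativity of the kernel weights is what justifies moving the absolute value inside the sum, and second, that the sums are finite (or the value functions are bounded), so all expressions are well defined.
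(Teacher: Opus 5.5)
Your proposal is correct and follows the paper's own proof: both bound each coordinate by the convex combination $\sum_s \eta_S(s\mid\bar s)\,|V_1(s)-V_2(s)|\le\|V_1-V_2\|_\infty$ and take the supremum over $\bar s$. The case of $\Gamma$ is then handled the same way with the weights $\nu_S(\cdot\mid s)$.
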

\begin{proof}
    A direct calculation shows that
\begin{equation}
\big|(LV_1)(\bar s)-(LV_2)(\bar s)\big|
\le
\sum_s\eta_S(s\mid\bar s)\big|V_1(s)-V_2(s)\big|
\le
\|V_1-V_2\|_\infty,
\end{equation}
which implies that $\|L V_1 - L V_2\|_\infty\leq \| V_1 - V_2\|_\infty$. An analogous derivation shows that $\|\Gamma\bar V_1 - \Gamma\bar V_2\|_\infty\leq \| \bar V_1 - \bar V_2\|_\infty$.

\end{proof}

\begin{lemma}
\label{lemma:bellman_contraction}
    The Bellman operator $T$ satisfies
    $\|TV_1-TV_2\|_\infty\le\gamma \|V_1-V_2\|_\infty$.
\end{lemma}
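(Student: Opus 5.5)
The plan is to prove the statement pointwise, one state at a time, from the definition of $T$ as a maximum of the backup $B$ over the finite action set. Fix $s\in\mathcal S$ and write
\begin{equation*}
\big|(TV_1)(s)-(TV_2)(s)\big|
=
\Big|\max_{a}(BV_1)(s,a)-\max_{a}(BV_2)(s,a)\Big|.
\end{equation*}
The first step is the elementary fact that for any two real functions $f,g$ on a finite set, $|\max f-\max g|\le\max_a|f(a)-g(a)|$. To see this, let $a_1$ attain $\max f$. Then $\max f-\max g\le f(a_1)-g(a_1)\le\max_a|f(a)-g(a)|$, and the same argument with $f$ and $g$ swapped bounds the other sign. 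Applying this with $f=(BV_1)(s,\cdot)$ and $g=(BV_2)(s,\cdot)$ reduces the claim to bounding the backup differences uniformly in $a$.

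The second step uses the form of $B$ in \autoref{eq:b-backup}. The reward terms cancel, which gives
\begin{equation*}
(BV_1)(s,a)-(BV_2)(s,a)=\gamma\,\E_{s'\sim P(\cdot\mid s,a)}\big[V_1(s')-V_2(s')\big].
\end{equation*}
Because $P(\cdot\mid s,a)$ is a probability distribution, the absolute value of this expectation is at most $\gamma\|V_1-V_2\|_\infty$. This is the same averaging argument used in \autoref{lemma:small}.

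Combining the two steps gives $|(TV_1)(s)-(TV_2)(s)|\le\gamma\|V_1-V_2\|_\infty$ for every $s$. Taking the supremum over $s$ then yields the lemma. None of the steps is a real obstacle. The only point needing care is the max-difference inequality, which must be argued in both directions since the two maxima may be attained at different actions; finiteness of $\mathcal A$ guarantees that both maxima exist.
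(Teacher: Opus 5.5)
Your proof is correct and matches the paper's argument: it also applies $|\max_a x_a-\max_a y_a|\le\max_a|x_a-y_a|$ statewise, cancels the rewards, bounds the transition average by $\gamma\|V_1-V_2\|_\infty$, and takes the supremum over $s$. The only difference is that you prove the max-difference inequality (in both directions), whereas the paper simply quotes it.
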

\begin{proof}
Using the fact that
$\left|\max_a x_a-\max_a y_a\right|
\le
\max_a |x_a-y_a|$ one can obtain that
\begin{align*}
|(TV_1)(s)-(TV_2)(s)|
&\le
\max_{a\in\mathcal A}
\gamma
\left|
\sum_{s'\in\mathcal S}
P(s'\mid s,a)\big(V_1(s')-V_2(s')\big)
\right| \\
&\le
\gamma
\max_{a\in\mathcal A}
\sum_{s'\in\mathcal S}
P(s'\mid s,a)
|V_1(s')-V_2(s')| \\
&\le
\gamma \|V_1-V_2\|_\infty.
\end{align*}
Taking the supremum over \(s\in\mathcal S\) gives the desired result.
\end{proof}

\subsection{Proof of~\autoref{teo:main}}
\label{app:proof_theorem}

\begin{proof}
  Applying \autoref{lemma:concrete_residual} to \(W=\Gamma\bar V\) gives
  \[
  \|\Gamma\bar V-V^*\|_\infty
  \le
  \frac{1}{1-\gamma}
  \|\Gamma\bar V - T\Gamma\bar V\|_\infty.
  \]
  Then, a direct calculation using the triangle inequality shows that
  \begin{align*}
  \|\Gamma\bar V - T\Gamma\bar V\|_\infty
  &\le
  \|\Gamma\bar V-\Gamma\bar T^\dag\bar V\|_\infty
  +
  \|\Gamma\bar T^\dag\bar V-T\Gamma\bar V\|_\infty\\
  &\le
  \|\bar V - \bar T^\dag\bar V\|_\infty
  +
  \Delta_H(\kappa_S;d_{\mathcal V}).
  \end{align*}
  Above, the last inequality follows from \autoref{lemma:small} and \autoref{prop:intertwining}.
\end{proof}

\begin{lemma}
\label{lemma:concrete_residual}
For every bounded concrete value function \(W:\mathcal S\to\mathbb R\),
\begin{equation}
\|W-V^*\|_\infty
\le
\frac{1}{1-\gamma}\|TW-W\|_\infty .
\label{eq:concrete-residual-bound}
\end{equation}
\end{lemma}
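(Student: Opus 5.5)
The plan is the standard residual bound obtained from the contraction property of $T$ (\autoref{lemma:bellman_contraction}) together with the fixed-point identity $V^*=TV^*$. Since the MDP is finite with bounded rewards and $\gamma<1$, $V^*$ is the unique bounded fixed point of $T$ by Banach's theorem. Bounded $W$ and $V^*$ are all that is required.

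First, I insert $TW$ and apply the triangle inequality:
\begin{equation*}
\|W-V^*\|_\infty \le \|W-TW\|_\infty + \|TW-TV^*\|_\infty .
\end{equation*}
Second, I bound the last term using \autoref{lemma:bellman_contraction} and $TV^*=V^*$:
\begin{equation*}
\|TW-TV^*\|_\infty\le\gamma\|W-V^*\|_\infty .
\end{equation*}
This gives
\begin{equation*}
\|W-V^*\|_\infty\le\|TW-W\|_\infty+\gamma\|W-V^*\|_\infty .
\end{equation*}
Third, because $\|W-V^*\|_\infty$ is finite (both functions are bounded on the finite set $\mathcal S$), I can subtract $\gamma\|W-V^*\|_\infty$ from both sides and divide by $1-\gamma>0$. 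This yields \autoref{eq:concrete-residual-bound}.

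There is no genuine obstacle. The only point needing care is the finiteness of $\|W-V^*\|_\infty$, which makes the rearrangement legitimate, together with the existence of $V^*$ as a fixed point of $T$. If one prefers to avoid relying on the existence of $V^*$, an alternative route telescopes along the iterates $T^kW\to V^*$, using $\|T^{k+1}W-T^kW\|_\infty\le\gamma^k\|TW-W\|_\infty$ and summing the geometric series to obtain the same constant $1/(1-\gamma)$.
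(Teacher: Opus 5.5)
Your proof is correct and follows essentially the same route as the paper: insert $TW$ via the triangle inequality using $V^*=TV^*$, apply the $\gamma$-contraction of \autoref{lemma:bellman_contraction}, and rearrange. You also note that $\|W-V^*\|_\infty$ is finite, which is what justifies the rearrangement; the paper leaves this small point implicit.
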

\begin{proof}
Since \(V^*=TV^*\), the triangle inequality and
\autoref{lemma:bellman_contraction} give
\[
\|W-V^*\|_\infty
=
\|W-TV^*\|_\infty
\le
\|W-TW\|_\infty+\|TW-TV^*\|_\infty
\le
\|W-TW\|_\infty+\gamma\|W-V^*\|_\infty .
\]
Rearranging proves \autoref{eq:concrete-residual-bound}.
\end{proof}

\begin{proposition}
\label{prop:intertwining}
If $\Gamma\bar V\in\mathcal V$, then
$\norm{T\Gamma\bar V-\Gamma\bar T^\dag\bar V}_\infty
\le
\Delta_H(\kappa_S;d_{\mathcal V})$.
\end{proposition}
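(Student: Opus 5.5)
Write $W:=\Gamma\bar V\in\mathcal V$ and expand both sides pointwise at a fixed $s\in\mathcal S$. By definition, $(T W)(s)=\max_a (BW)(s,a)$. On the other side,
\[
(\Gamma\bar T^\dag\bar V)(s)=\sum_{\bar s}\nu_S(\bar s\mid s)\sum_{u}\eta_S(u\mid\bar s)\,(TW)(u)=\mathbb E_{U\sim\kappa_S(\cdot\mid s)}\big[(TW)(U)\big],
\]
which uses the state round-trip kernel $\kappa_S$ defined after \autoref{eq:pair-round-trip}. The difference is therefore $\mathbb E_{U\sim\kappa_S(\cdot\mid s)}\big[(TW)(s)-(TW)(U)\big]$. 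By Jensen (moving the absolute value inside the expectation),
\[
\big|(TW)(s)-(\Gamma\bar T^\dag\bar V)(s)\big|\le \mathbb E_{U\sim\kappa_S(\cdot\mid s)}\big|\max_a(BW)(s,a)-\max_b(BW)(U,b)\big|.
\]
Once each integrand is bounded by $D_\mathrm H(s,U)$, taking $\sup_s$ gives exactly $\Delta_H(\kappa_S;d_{\mathcal V})$ by \autoref{def:abstraction_error}.

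The key step is therefore a pointwise Hausdorff-type inequality: for any states $s,u$,
\[
\big|\max_a (BW)(s,a)-\max_b(BW)(u,b)\big|\le D_\mathrm H(s,u).
\]
We prove this one direction at a time. Let $a^*$ attain the maximum at $s$; the action set is finite, so maxima exist. For every $b$,
\[
\max_a(BW)(s,a)-\max_{b'}(BW)(u,b')\le (BW)(s,a^*)-(BW)(u,b).
\]
Bellman-compatibility \autoref{eq:v-bellman-compatible}, which applies because $W\in\mathcal V$, bounds the right side by $d_{\mathcal V}((s,a^*),(u,b))$. Taking $\inf_b$ gives at most $\inf_b d((s,a^*),(u,b))\le\sup_a\inf_b d((s,a),(u,b))$. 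The symmetric argument, starting from a maximiser $b^*$ at $u$, bounds the reverse difference by $\sup_b\inf_a d((s,a),(u,b))$. Combining the two bounds with a $\max$ yields $D_\mathrm H(s,u)$.

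The only delicate point is this matching step. One must take the optimal action at one state and pick the best-matching action at the other, rather than comparing equal labels. This ordering is what lets the Hausdorff quantity, rather than $D_B$, appear. The hypothesis $\Gamma\bar V\in\mathcal V$ is used precisely there. Everything else is linearity of the kernels and Jensen.
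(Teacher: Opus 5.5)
Your proposal is correct and follows essentially the paper's own proof. Both first establish $|(TW)(s)-(TW)(u)|\le D_H(s,u)$ by taking an optimal action at one state and matching it to the best action at the other via Bellman compatibility, then write $\Gamma\bar T^\dag\bar V$ as an expectation of $TW$ under $\kappa_S(\cdot\mid s)$ and move the absolute value inside. The only cosmetic difference is that you bound each signed difference directly, whereas the paper splits into cases according to whether $(TW)(s)$ or $(TW)(u)$ is larger.
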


\begin{proof}
Let us first show that Bellman compatibility implies that
\begin{equation}
|(TW)(s)-(TW)(u)|\le D_H(s,u)
\quad\text{for any}\quad
W\in\mathcal V, s,u\in\mathcal S.
\label{eq:value-lax-ineq}
\end{equation}
To see this, suppose first that $(TW)(s)\ge (TW)(u)$. Then
let $a_*\in\arg\max_a(BW)(s,a)$. Since the action set is finite,
\begin{align*}
(TW)(s)-(TW)(u)
&=
\max_a(BW)(s,a)-\max_b(BW)(u,b)\\
&=
(BW)(s,a_*)-\max_b(BW)(u,b)\\
&=
\min_b\big|(BW)(s,a_*)-(BW)(u,b)\big|\\
&\le
\max_a\min_b\big|(BW)(s,a)-(BW)(u,b)\big|\\
&\le
\max_a\min_b d_{\mathcal V}\big((s,a),(u,b)\big)\\
&\le
D_H(s,u).
\end{align*}
The case $(TW)(u)\ge (TW)(s)$ is obtained in an analogous way using the other directed Hausdorff term in the definition of $D_H$.

Now set $W=\Gamma\bar V$. Since $\bar T^\dag=LT\Gamma$ and
$K_S:=\Gamma L$ has kernel $\kappa_S$,
\[
\Gamma\bar T^\dag\bar V
=
\Gamma LT\Gamma\bar V
=
K_S(TW).
\]
Thus, for each $s$,
\[
(\Gamma\bar T^\dag\bar V)(s)
=
\E_{U\sim\kappa_S(\cdot\mid s)}[(TW)(U)].
\]
Then, using \autoref{eq:value-lax-ineq} and assuming $W\in\mathcal V$, then
\begin{align*}
\big|
(T\Gamma\bar V)(s)-(\Gamma\bar T^\dag\bar V)(s)
\big|
&=
\big|
(TW)(s)-\E_{U\sim\kappa_S(\cdot\mid s)}[(TW)(U)]
\big|\\
&\le
\E_{U\sim\kappa_S(\cdot\mid s)}
\Big[\big|(TW)(s)-(TW)(U)\big|\Big]\\
&\le
\E_{U\sim\kappa_S(\cdot\mid s)}\big[D_H(s,U)\big].
\end{align*}
Taking the supremum over $s$ and using the definition of
\(\Delta_H(\kappa_S;d_{\mathcal V})\) gives the desired result.
\end{proof}

\subsection{Proof of generality of \autoref{eq:mu-decoder-defect}}
\label{app:deterministic_decoder}

\begin{lemma}[Deterministic decoders suffice]
\label{lem:deterministic-decoder-suffices}
Suppose that, in the averaged rate-distortion objective, the representative map
$g$ is replaced by an arbitrary stochastic pair decoder
$\chi(\cdot\mid \bar s,\bar a)\in\Delta(\mathcal S\times\mathcal A)$, and define
\[
\overline\Delta_\mu(\nu,\chi)
:=
\mathbb E_{\substack{(S,A)\sim\mu,\,(\bar S,\bar A)\sim\nu(\cdot\mid S,A),\\
(U,B)\sim\chi(\cdot\mid \bar S,\bar A)}}
\Big[d_{\mathcal V}\big((S,A),(U,B)\big)\Big].
\]
Then for every fixed encoder $\nu$ there exists a (deterministic) representative
map $g:\bar{\mathcal S}\times\bar{\mathcal A}\to\mathcal S\times\mathcal A$
such that
\[
\overline\Delta_\mu(\nu,g)
\le
\overline\Delta_\mu(\nu,\chi)
\]
for every stochastic pair decoder $\chi$. Consequently, restricting
\autoref{eq:mu-decoder-defect} to deterministic representatives causes no loss
of optimality in the rate-distortion objective \autoref{eq:rd-objective}.
\end{lemma}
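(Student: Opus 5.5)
The plan is to show that, for a fixed encoder $\nu$, the averaged distortion is linear in the decoder. It therefore decouples across abstract symbols, and within each symbol it is minimised at a point mass.

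First, write the joint law of $(S,A,\bar S,\bar A)$ as $\mu(s,a)\nu(\bar s,\bar a\mid s,a)$. Let $p(\bar s,\bar a):=\sum_{s,a}\mu(s,a)\nu(\bar s,\bar a\mid s,a)$ be the abstract marginal. On symbols with $p(\bar s,\bar a)>0$, let $q(s,a\mid\bar s,\bar a)$ be the posterior. By Fubini (all sums are finite),
\[
\overline\Delta_\mu(\nu,\chi)=\sum_{\bar s,\bar a}p(\bar s,\bar a)\sum_{u,b}\chi(u,b\mid\bar s,\bar a)\,c_{\bar s,\bar a}(u,b),
\qquad
c_{\bar s,\bar a}(u,b):=\E_{(S,A)\sim q(\cdot\mid\bar s,\bar a)}\big[d_{\mathcal V}((S,A),(u,b))\big].
\]

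Second, for each symbol with $p>0$, define
\[
g(\bar s,\bar a)\in\arg\min_{(u,b)\in\mathcal S\times\mathcal A}c_{\bar s,\bar a}(u,b).
\]
The minimiser exists because $\mathcal S\times\mathcal A$ is finite. For symbols with $p(\bar s,\bar a)=0$, define $g$ arbitrarily, since they carry zero weight. Any average of $c$ under $\chi(\cdot\mid\bar s,\bar a)$ is at least its minimum, so
\[
\sum_{u,b}\chi(u,b\mid\bar s,\bar a)\,c_{\bar s,\bar a}(u,b)\ge c_{\bar s,\bar a}\big(g(\bar s,\bar a)\big)
\]
holds termwise. Summing with weights $p$ gives $\overline\Delta_\mu(\nu,g)\le\overline\Delta_\mu(\nu,\chi)$ for every $\chi$. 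The bound holds uniformly because $g$ depends only on $\nu$ and $\mu$, not on $\chi$.

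Third, for the consequence: the rate terms $I_\mu(\bar S;S)$ and $I_\mu(\bar A;S,A\mid\bar S)$ depend only on $(\mu,\nu)$, not on the decoder. Hence for each $\nu$, minimising \autoref{eq:rd-objective} over stochastic decoders attains the same value as minimising over deterministic $g$. The outer minimisation over $\nu$ then yields the same optimum.

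The main obstacle is bookkeeping rather than substance. The points to check are the handling of zero-probability symbols and ensuring finiteness of $d_{\mathcal V}$ so the expectations are well defined. Bellman-compatible distortions such as $d_\rho$ are bounded on a finite space, so this should be immediate. I would also note explicitly that the decoder is allowed to depend on the full pair $(\bar s,\bar a)$, so $g$ ranges over all of $\mathcal G$.
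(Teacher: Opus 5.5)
Your proposal is correct and matches the paper's proof: both rewrite $\overline\Delta_\mu(\nu,\chi)$ blockwise using the abstract marginal and the posterior. Both then pick, on each positive-mass block, a representative minimising the posterior-averaged distortion (the paper phrases this as a linear functional on the simplex being minimised at a vertex). Your explicit remark that the rate terms depend only on $(\mu,\nu)$ spells out the ``consequently'' clause, which the paper leaves implicit.
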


\begin{proof}
Fix an encoder $\nu$ and let
\[
\pi_\nu(\bar s,\bar a)
:=
\sum_{s,a}\mu(s,a)\,\nu(\bar s,\bar a\mid s,a)
\]
be the induced marginal over abstract pairs. For every abstract pair with
$\pi_\nu(\bar s,\bar a)>0$, define the posterior
\[
\nu_\mu(s,a\mid \bar s,\bar a)
:=
\frac{\mu(s,a)\,\nu(\bar s,\bar a\mid s,a)}
{\pi_\nu(\bar s,\bar a)}.
\]
Then the averaged defect can be written blockwise as
\begin{align*}
\overline\Delta_\mu(\nu,\chi)
&=
\sum_{\bar s,\bar a}\pi_\nu(\bar s,\bar a)
\sum_{u,b}\chi(u,b\mid \bar s,\bar a)
\sum_{s,a}\nu_\mu(s,a\mid \bar s,\bar a)\,
d_{\mathcal V}\big((s,a),(u,b)\big).
\end{align*}
For fixed $(\bar s,\bar a)$, the inner expression is a linear functional of the
probability vector $\chi(\cdot\mid \bar s,\bar a)$ over the simplex
$\Delta(\mathcal S\times\mathcal A)$. A linear functional on a simplex attains
its minimum at an extreme point, hence at a point mass. Therefore choose
\[
g(\bar s,\bar a)
\in
\arg\min_{(u,b)\in\mathcal S\times\mathcal A}
\sum_{s,a}\nu_\mu(s,a\mid \bar s,\bar a)\,
d_{\mathcal V}\big((s,a),(u,b)\big)
\]
on each positive-mass block, and choose $g(\bar s,\bar a)$ arbitrarily on
zero-mass blocks. The resulting deterministic decoder satisfies
\[
\overline\Delta_\mu(\nu,g)
\le
\overline\Delta_\mu(\nu,\chi)
\]
for every stochastic pair decoder $\chi$.

\end{proof}

\section{From state bisimulation to soft state-action abstractions}
\label{app:soft-sa-bisim}

This appendix provides a brief overview of various approaches for building abstractions in
reinforcement learning, tracing a path from classic bisimulation to soft
state-action abstractions as defined in \autoref{sec:soft}.

\subsection{Bisimulation}

A classic approach to building lossless state abstractions is \emph{bisimulation}
\citep{dean1997model,givan2003equivalence}.

\begin{definition}\label{def:state-bisim}
Let $M=(\mathcal S,\mathcal A,P,r,\gamma)$ be an MDP.
A \textbf{bisimulation} is an equivalence relation
$\sim$ on $\mathcal S$ such that, whenever
$s\sim s'$, the following conditions hold for every action $a\in\mathcal A$:
\begin{align}
r(s,a) &= r(s',a), \label{eq:state-bisim-reward} \\
\sum_{u\in C} P(u\mid s,a) &= \sum_{u\in C}P(u\mid s',a)
\quad
\text{for every equivalence class } C\in \mathcal S / {\sim}. \label{eq:state-bisim-kernel}
\end{align}
\end{definition}

Thus, two states are bisimilar when every action yields the same immediate
reward and the same next-state law after one forgets distinctions inside the
equivalence classes. Here, the equivalence classes of bisimilar states are built
deterministically, primitive action labels are compared directly, and the
conditions on reward
and transition equalities must hold exactly. 
Bisimulation builds upon the notion of \emph{lumpability} of Markov chains, 
which highlights coarse-grainings of Markov chains that
result in processes that are also Markovian~\citep{kemeny1960finite}. 
The term `bisimulation' comes from the idea that bisimilar MDPs can simulate
each other, in the sense that they are behaviourally
equivalent~\citep{li2006towards} and hence everything that can be done in one can also be done in the other.

\subsection{MDP homomorphisms}

State bisimulation leverages the idea of relaxing distinctions between environmental states; 
thus, a natural extension is to abstract state-action pairs jointly.
Indeed, Equations \ref{eq:state-bisim-reward} and \ref{eq:state-bisim-kernel} suggest 
state-action pairs $(s,a)$ as natural objects to coarse-grain, 
as they determine immediate reward and next-state transitions. 
Thus, a state-action abstraction assigns `concrete' state-action 
pairs to `abstract' pairs via coarse-graining mappings~\citep{ravindran2004algebraic,taylor2008bounding,abel2016near,abel2020value,zhao2022continuous}.
A standard way to operationalise this idea is through MDP homomorphisms, as defined next.

\begin{definition}[MDP homomorphism]
\label{def:mdp-homomorphism}
An \textbf{MDP homomorphism} from a MDP
\(M=(\mathcal S,\mathcal A,P,r,\gamma)\) to another MDP
\(\bar M=(\bar{\mathcal S},\bar{\mathcal A}(\bar s),\bar P,\bar r,\gamma)\)
is a pair of surjective maps
$f(s,a)=\big(f_S(s),f_A(s,a)\big)$
with
$f_S:\mathcal S\to\bar{\mathcal S}$ and
$f_A(s,a)\in\bar{\mathcal A}(f_S(s))$, such that for all
\(s\in\mathcal S\), \(a\in\mathcal A\), and
\(\bar u\in\bar{\mathcal S}\),
\begin{align}
r(s,a)
&=
\bar r\big(f_S(s),f_A(s,a)\big),
\label{eq:hom-reward}
\\
\bar P\big(\bar u\mid f_S(s),f_A(s,a)\big)
&=
\sum_{s':f_S(s')=\bar u}P(s'\mid s,a).
\label{eq:hom-kernel}
\end{align}
\end{definition}

Equivalently, one may define a homomorphic quotient directly from a map
$f=(f_S,f_A)$ without first specifying \(\bar r\) and \(\bar P\). The required condition is that concrete pairs with the same abstract label
have the same one-step control effect at the abstract level:
\begin{align}
f(s,a)=f(t,b)
&\Longrightarrow
r(s,a)=r(t,b), \label{eq:sa-hom-reward}\\
f(s,a)=f(t,b)
&\Longrightarrow
\sum_{s':f_S(s')=\bar u}P(s'\mid s,a)
=
\sum_{s':f_S(s')=\bar u}P(s'\mid t,b)
\quad
\text{for every }\bar u\in\bar{\mathcal S}.
\label{eq:sa-hom-kernel}
\end{align}
When equations \ref{eq:sa-hom-reward} and \ref{eq:sa-hom-kernel} hold, the abstract
reward and transition in \autoref{def:mdp-homomorphism} are obtained by
choosing any representative \((s,a)\) of an abstract pair
\((\bar s,\bar a)\) and setting
\[
\bar r(\bar s,\bar a)=r(s,a),
\qquad
\bar P(\bar u\mid\bar s,\bar a)
=
\sum_{s':f_S(s')=\bar u}P(s'\mid s,a).
\]
The conditions ensure that this does not depend on which representative is
chosen.

One can think of an MDP homomorphism as a state-action analogue of
bisimulation.\footnote{Nonetheless, the literature calls this object
an MDP homomorphism, action abstraction, or value-preserving state-action
abstraction, rather than `state-action bisimulation'~\citep{ravindran2004algebraic,taylor2008bounding,abel2020value,zhao2022continuous}.} 
Indeed, the map $f$ induces the following
equivalence relation on $\mathcal S\times\mathcal A$:
\[
(s,a)\sim_{SA}(t,b)
\quad\Longleftrightarrow\quad
f(s,a)=f(t,b).
\]
Equations~\ref{eq:sa-hom-reward} and \ref{eq:sa-hom-kernel} establish that
state-action-equivalent pairs have equal rewards and equal transition mass into
every state block, being the pair-level analogue of
\autoref{def:state-bisim}.
If the action component is forced to preserve its labels 
(i.e., $f_A(s,a)=a$), 
then the construction reduces to standard bisimulation.

\subsection{Action-role matching via symmetry-based homomorphisms}

Classical state bisimulation compares two
states by asking whether every concrete action has the same effect in both
states; in contrast, the action component of an MDP homomorphism can depend on the
state. This raises an interesting question for state-action abstraction: under what conditions can one say that  different primitive actions play the same control role in different concrete states?

Symmetry-based homomorphisms give an elegant answer to this 
question. These are a particular case of MDP homomorphisms, 
in which states and actions are matched via a structural 
symmetry~\citep{ravindran2004algebraic,van2020mdp}. 

Concretely, let a group
$G$ act on $\mathcal S$ through state maps $\sigma_g:\mathcal S\to\mathcal S$,
and let $\rho_g^s:\mathcal A\to\mathcal A$ be the corresponding action
relabelling at state $s$.
One can show that this group yields a MDP homomorphism if the following conditions hold~\citep{van2020mdp}:
\[
r\big(\sigma_g(s),\rho_g^s(a)\big)=r(s,a),
\qquad
P\big(\sigma_g C\mid \sigma_g(s),\rho_g^s(a)\big)
=
P(C\mid s,a),
\]
where $C$ is an orbit generated by the action $\sigma_g$ and $\sigma_g C=\{\sigma_g(u):u\in C\}$. 

Thus, using the resulting orbits as equivalence classes
produces a MDP homomorphism. The fact that $\rho_g^s$ may depend on the
current state is what allows symmetry-based homomorphisms to capture
equivariant relationships, where the same functional role is expressed by
different primitive action labels in different states.

\subsection{Relaxations of state bisimulation and MDP homomorphisms}

The various notions above can be relaxed along several axes --- one can replace equality by a metric error, replace same-label action comparison
by action-role matching, replace worst-case all-action requirements by
policy-conditioned ones, or replace exact model preservation by learned
surrogate losses. We review some of these approaches next.

\paragraph{Bisimulation metrics.}
The canonical relaxation of bisimulation is the fixed-point
pseudometric presented by \citet{ferns2004metrics,ferns2011bisimulation} (see \autoref{app:value-compatible-distortions}). 
Instead of asking whether two states have identical rewards and identical transition mass
into every equivalence class, it assigns a distance by comparing the same
primitive action at both states. 
To build this metric, one considers the following functional:
\[
\mathcal F(\rho)(s,t)
:=
\max_{a\in\mathcal A}
\Big\{
c_r \big|r(s,a)-r(t,a)\big|
+
c_p W_\rho\big(P(\cdot\mid s,a),P(\cdot\mid t,a)\big)
\Big\},
\]
where \(c_r,c_p\) are nonnegative constants, and \(W_\rho\) is the Kantorovich/Wasserstein distance induced by the current
state pseudometric \(\rho\). 
It can be shown~\citep{ferns2004metrics,ferns2011bisimulation} 
that the smallest fixed point of \(\mathcal F\) gives zero 
distance exactly on bisimulation classes, while also providing a 
graded notion of almost-bisimilarity. Note that this is primarily 
a \emph{state abstraction} method: actions are used to test states, 
but they are not coarse-grained.

\paragraph{Lax bisimulation and approximate homomorphisms.}
A lax bisimulation metric~\citep{taylor2008bounding} is used to 
relax the notion of MDP homomorphism in an analogous way 
as Ferns' bisimulation metric is used to relax state bisimulation. 
Given a state pseudometric \(\rho:\mathcal S\times\mathcal S\to\mathbb R_+\) and nonnegative weights
\(c_r,c_p\), a state-action pair discrepancy can be written as
\[
\delta_\rho\big((s,a),(t,b)\big)
:=
c_r |r(s,a)-r(t,b)|
+
c_p W_\rho\big(P(\cdot\mid s,a),P(\cdot\mid t,b)\big).
\]
This quantity compares two concrete pairs even when \(a\) and \(b\) are
different primitive labels.
Bisimulation uses this quantity to create a state-only pseudometric by considering the action that gives the worst-case divergence.
A subtler approach is taken by lax bisimulation, which builds a state metric through a symmetric Hausdorff matching over actions:
\[
D_\text{H}(\rho)(s,t)
:=
\max\left\{
\sup_{a\in\mathcal A}\inf_{b\in\mathcal A}
\delta_\rho\big((s,a),(t,b)\big),
\;
\sup_{b\in\mathcal A}\inf_{a\in\mathcal A}
\delta_\rho\big((s,a),(t,b)\big)
\right\}.
\]
The fixed point of this Hausdorff-style operator has zeros corresponding
to the state classes of exact homomorphic quotients.
At nonzero distance, the same construction gives
a quantitative approximate-homomorphism error
\citep{taylor2008bounding,zhao2022continuous}.
Thus, the lax bisimulation metric is
state-level, but it is induced by state-action pair comparisons and
Hausdorff action matching. Thus, it can be used to relax MDP homomorphisms in general, and symmetry-based homomorphisms in particular.

\paragraph{Behavioural and learned relaxations.}
Another kind of relaxation is to preserve only distinctions relevant to a policy, a
value criterion, or the data distribution under which the representation is
learned. Policy-conditioned similarities and value-preserving state-action
abstractions weaken exact bisimulation or exact homomorphism by preserving the
aspects of the dynamics needed for a particular behavioural objective rather
than the full controlled transition structure
\citep{abel2020value,castro2020scalable,panangaden2024policy}. In deep RL, the
exact reward and transition comparisons are often replaced by sample-based
losses or latent-model objectives, as in DeepMDP-style latent models, MICo,
robust bisimulation metric learning, contrastive behavioural similarities, and
related objectives
\citep{gelada2019deepmdp,zhang2021learning,castro2021mico,kemertas2021towards,agarwal2021contrastive,liao2023pitfalls}.
These methods often do not explicitly construct a quotient MDP, but they aim to learn representations in which irrelevant behavioural distinctions are small.

\subsection{Soft state-action coarse-grainings}

A final step is to relax the coarse-graining map itself. State bisimulation,
homomorphisms, and their metric relaxations still presuppose `hard' assignments:
a concrete state or state-action pair either belongs to a block or it does not.
Soft abstractions replace this hard membership by stochastic encoders, in the
same broad family as stochastic model reduction, variational abstractions, and
information-bottleneck approaches to representation learning
\citep{dean1997model,abel2019state,biza2021learning,delgrange2022distillation,goyal2019infobot,igl2019generalization,xu2022wdibs}.

For state-action abstraction, the stochastic encoder used in
\autoref{sec:soft} has the factorised form
\[
\nu(\bar s,\bar a\mid s,a)
=
\nu_S(\bar s\mid s)\,\nu_A(\bar a\mid s,a),
\]
where $\nu_S:\mathcal S\to\Delta(\bar{\mathcal S})$ and
$\nu_A:\mathcal S\times\mathcal A\to\Delta(\bar{\mathcal A})$. This keeps the
state/action separation of MDP homomorphisms, while allowing each concrete pair
to distribute mass over several abstract pairs. 
Because there are no hard coarse-grainings anymore, 
this does not lead to equivalence classes. 
Instead, the concrete and abstract dynamics should \emph{intertwine}: evolving
in the concrete MDP and then encoding should agree, in expectation, with
encoding first and then evolving in the abstract MDP.

The construction presented in \autoref{sec:soft} is one way to instantiate such a
soft abstraction. In addition to the encoder $\nu$, it introduces a decoder
\(\eta\), and defines \(\bar r\) and \(\bar P\) by the decoder-averaged
formulas (Equations \ref{eq:soft-sa-reward} and \ref{eq:soft-sa-kernel}).
The phrase `soft state-action abstraction' is meant to emphasize the specific
line of descent here: it is a stochastic version of the pair-level behavioural
equivalence encoded by MDP homomorphisms. 

A comparison of the various abstraction approaches discussed here is provided in \autoref{tab:abstraction-comparison}.

\begin{table}[h!]
\centering
\caption{Comparison of abstraction approaches in reinforcement learning}
\begin{tabular}{@{}p{4.3cm}p{3.0cm}p{1.3cm}p{3.0cm}@{}}
\toprule
\textbf{Name} & \textbf{Type} & \textbf{Exact} & \textbf{Coarse-graining} \\
\midrule
Bisimulation & State abstraction & Yes & Deterministic \\
\midrule
MDP homomorphism & State-action\newline abstraction & Yes & Deterministic \\
\midrule
Bisimulation metric & State abstraction & No & Deterministic \\
\midrule
Lax bisimulation & Action-aware state\newline abstraction & No & Deterministic \\
\midrule
Soft state-action abstractions & State-action\newline abstraction & No & Stochastic \\
\bottomrule
\end{tabular}\label{tab:abstraction-comparison}
\end{table}

\section{Bellman-compatible state-action distortions}
\label{app:value-compatible-distortions}

Let $x=(s,a)$ and $y=(u,b)$ be two state-action pairs and use
$P_x:=P(\cdot\mid x)$ and $P_y:=P(\cdot\mid y)$ as shorthand notation for the transition kernels. 
For a class of value functions $\mathcal V$, the worst Bellman-backup discrepancy is bounded by
\begin{align}
\sup_{V\in\mathcal V}
\big|
(BV)(x)-(BV)(y)
\big|
&=
\sup_{V\in\mathcal V}
\left|
r(x)-r(y)
\;+\;
\gamma
\sum_{s'\in\mathcal S} V(s')\,\Big(P(s'|x) - P(s'|y)\Big)
\right|
\notag
\\
&\le
|r(x)-r(y)|
\;+\;
\gamma
D_{\mathcal V}(P_x,P_y),
\label{eq:value-compatible-decomposition}
\end{align}
where
\begin{equation}
D_{\mathcal V}(p,q)
:=
\sup_{V\in\mathcal V}
\left|
  \sum_{s'\in\mathcal S} V(s')\,\Big( p(s') - q(s')\Big)
\right|.
\label{eq:value-induced-ipm}
\end{equation}
If $\mathcal V$ is closed under sign changes (i.e., $V\in\mathcal V$ implies $-V\in\mathcal V$), 
then the upper bound in
\autoref{eq:value-compatible-decomposition} is tight whenever
$D_{\mathcal V}(P_x,P_y)<\infty$. Indeed, for any $\varepsilon>0$ choose
$V_\varepsilon\in\mathcal V$ such that
\[
\left|\sum_{s'\in\mathcal S} V_\varepsilon(s')\,\Big( P(s'|x) - P(s'|y)\Big)\right|
\ge D_{\mathcal V}(P_x,P_y)-\varepsilon.
\]
Using the fact that one can always replace
$V_\varepsilon$ by $-V_\varepsilon$ if needed to align the sign of the transition
term with $r(x)-r(y)$, one can guarantee that
\[
\sup_{V\in\mathcal V}
\left|
r(x)-r(y)+\gamma\sum_{s'\in\mathcal S} V(s')\,\Big(P(s'|x) - P(s'|y)\Big)
\right|
\ge
|r(x)-r(y)|
+\gamma\Big(D_{\mathcal V}(P_x,P_y)-\varepsilon\Big).
\]
Letting $\varepsilon\downarrow0$ gives equality. This motivates the canonical
Bellman-compatible upper bound
\begin{equation}
d_{\mathcal V}^*(x,y)
:=
|r(x)-r(y)|
\;+\;
\gamma
D_{\mathcal V}(P_x,P_y).
\label{eq:value-compatible-symmetric}
\end{equation}
In the following we consider special cases of $\mathcal V$.

\paragraph{Bounded $L^\infty$ value classes.}
If $\mathcal V_{\infty,C}:=\{V:\|V\|_\infty\le C\}$, then
\begin{align}
D_{\mathcal V_{\infty,C}}(p,q)
&=
C\sup_{\|f\|_\infty\le 1}
\left|
  \sum_{s\in\mathcal S} f(s)\Big(p(s) - q(s)\Big)
  \right|
\notag
=
C\sum_{z\in\mathcal S}|p(z)-q(z)|,
\label{eq:linfty-ipm-tv}
\end{align}
where the second equality follows by taking
$f(z)=\operatorname{sign}(p(z)-q(z))$. With the convention
$d_{\mathrm{TV}}(p,q):=\sum_{s\in\mathcal S}|p(s)-q(s)|$, this gives
\begin{equation}
d_{\mathcal V_{\infty,C}}^*(x,y)
=
|r(x)-r(y)|
\;+\;
\gamma\,C\,
d_{\mathrm{TV}}\!\big(P_x,P_y\big).
\end{equation}
This choice gives a uniform estimate, but ignores any geometry on $\mathcal S$ and therefore 
provides a conservative estimate.

\paragraph{Bounded Lipschitz classes.}
Let $\rho$ be a pseudometric on $\mathcal S$ and set
\[
\mathcal V_{\mathrm{Lip}(\rho),C}
:=
\{V:\|V\|_{\mathrm{Lip},\rho}\le C\},
\quad\text{where}\quad
\|V\|_{\mathrm{Lip},\rho}
:=
\sup_{\rho(s,u)>0}\frac{|V(s)-V(u)|}{\rho(s,u)}.
\]
Since $P_x$ and $P_y$ are probability measures with equal total mass, additive
constants in $V$ do not affect $\sum_{s\in\mathcal S} V(s)\,\Big(P(s|x) - P(s|y)\Big)$. 
Then, one can show that
\begin{align}
D_{\mathcal V_{\mathrm{Lip}(\rho),C}}(P_x,P_y)
&=
C
\sup_{\|f\|_{\mathrm{Lip},\rho}\le1}
\left|\sum_{s\in\mathcal S} f(s)\,\Big(P(s|x) - P(s|y)\Big)\right|
=
C\,W_{1,\rho}(P_x,P_y),
\label{eq:lip-ipm-wasserstein}
\end{align}
where $W_{1,\rho}$ is the Wasserstein distance with ground cost $\rho$ defined as 
\[
W_{1,\rho}(p,q)
:=
\min_{\zeta\in\Pi(p,q)}
\sum_{s,u\in\mathcal S}\rho(s,u)\,\zeta(s,u)
\]
and $\Pi(p,q)$ is the set of couplings with marginals $p$ and $q$
\[
\Pi(p,q)
:=
\Big\{\zeta: \zeta(s,u)\ge 0, \sum_{s\in\mathcal S}\zeta(s,u)=q(u), \sum_{u\in\mathcal S}\zeta(s,u)=p(s)\Big\}.
\]
Above, the second equality in \autoref{eq:lip-ipm-wasserstein} follows from the Kantorovich-Rubinstein duality.
Thus
\[
d_{\mathcal V_{\mathrm{Lip}(\rho),C}}^*(x,y)
=
|r(x)-r(y)|
\;+\;
\gamma\,C\,
W_{1,\rho}\!\big(P_x,P_y\big).
\]
The finite-state setting makes the first-moment condition automatic. In more
general spaces, the same formula requires finite first moments under $\rho$.

\paragraph{Ferns' fixed-point metric.}
The two distortions considered above evaluate dynamics considering only one step ahead. 
Thus, they may state that two state-action pairs are similar without taking into account their long-term consequences.
A classic construction that goes beyond this is Ferns' fixed-point metric \citep{ferns2004metrics,ferns2011bisimulation}, which we introduce next.

Given a state pseudometric $\rho:\mathcal S\times\mathcal S\to[0,\infty)$, define the associated distortion
\begin{equation}
d_\rho\big((s,a),(u,b)\big)
:=
|r(s,a)-r(u,b)|
\;+\;
\gamma W_{1,\rho}\!\big(P(\cdot\mid s,a),P(\cdot\mid u,b)\big).
\label{eq:rho-pair-distortion}
\end{equation}
The operator introduced in \citet{ferns2004metrics,ferns2011bisimulation} compares states using the worst-case action matching:
\begin{equation}
(\mathcal F\rho)(s,u)
:=
\max_{a\in\mathcal A}
d_\rho\big((s,a),(u,a)\big).
\label{eq:ferns-state-operator}
\end{equation}
It can be shown that $\mathcal F$ is a $\gamma$-contraction in $L^\infty$ for finite MDPs, and thus it has a fixed point.  
At this fixed point,
\[
\rho_B(s,u)
=
\max_{a\in\mathcal A}
\Big\{
|r(s,a)-r(u,a)|
+\gamma W_{1,\rho_B}\!\big(P(\cdot\mid s,a),P(\cdot\mid u,a)\big)
\Big\}.
\]
The zero sets of $\rho_B$ are exactly the state-bisimulation classes: distance
zero forces equal rewards and equal transition mass into each zero-distance
class for every action, and those two equalities conversely make the fixed-point
distance vanish. This is the sense in which the metric used inside the
Wasserstein term is also the metric being solved for.

For state-action abstraction with action relabelling, one can replace the
same-label maximum in \autoref{eq:ferns-state-operator} by the Hausdorff/lax
action matching used in the main text:
\begin{equation}
D_H^\rho(s,u)
:=
\max\left\{
\sup_{a\in\mathcal A}\inf_{b\in\mathcal A}
d_\rho\big((s,a),(u,b)\big),
\;
\sup_{b\in\mathcal A}\inf_{a\in\mathcal A}
d_\rho\big((s,a),(u,b)\big)
\right\}.
\label{eq:lax-hausdorff-state-distance}
\end{equation}
This replaces literal action identity by best action matching across states.
The main bound only requires a Bellman-compatible pair distortion; it can
therefore use either the same-action Ferns metric, the Hausdorff/lax variant,
or a computable surrogate for them.

\section{From \texorpdfstring{$\mu$}{mu}-averaged to worst-case distortion}
\label{app:mu-vs-uniform}

The main bound in \autoref{teo:main} is stated in terms of the worst-case
Hausdorff abstraction distortion
\[
\Delta_H(\kappa_S;d_{\mathcal V})
=
\sup_{s\in\mathcal S}
\mathbb E_{U\sim\kappa_S(\cdot\mid s)}
\big[D_H(s,U)\big].
\]
By contrast, the rate-distortion objective in \autoref{eq:rd-objective}
optimises a $\mu$-averaged reconstruction defect. This appendix makes explicit
what such an average can and cannot guarantee.

For this purpose, let us consider the quantity
$e_{\kappa}(s)
:=
\mathbb E_{U\sim\kappa_S(\cdot\mid s)}
\big[D_H(s,U)\big]$. 
Using it, the worst-case Hausdorff distortion can be expressed as
\begin{equation}
\Delta_H(\kappa_S;d_{\mathcal V})=\sup_{s\in\mathcal S} e_\kappa(s).
\end{equation}
Additionally, for a given distribution $\mu_S\in\Delta(\mathcal S)$
we can also use $e_{\kappa}(s)$ to build the averaged Hausdorff distortion as
\begin{equation}
\Delta^H_{\mu_S}(\kappa_S;d_{\mathcal V})
:=
\mathbb E_{S\sim\mu_S}[e_\kappa(S)].
\label{eq:mu-hausdorff-defect}
\end{equation}
We now study relationships between these two quantities.

\begin{proposition}
\label{prop:mu-to-worst-case}
For every distribution $\mu_S$ on $\mathcal S$,
$\Delta^H_{\mu_S}(\kappa_S;d_{\mathcal V})
\le
\Delta_H(\kappa_S;d_{\mathcal V})$.
If $\mathcal S$ is finite and
$\mu_{\min}:=\min_{s\in\mathcal S}\mu_S(s)>0$,
then
\begin{equation}
\Delta_H(\kappa_S;d_{\mathcal V})
\le
\frac{
\Delta^H_{\mu_S}(\kappa_S;d_{\mathcal V})
}{
\mu_{\min}
}.
\label{eq:worst-via-mu-min}
\end{equation}
\end{proposition}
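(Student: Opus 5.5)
The argument rests on two facts about $e_\kappa(s)=\mathbb E_{U\sim\kappa_S(\cdot\mid s)}[D_H(s,U)]$: it is nonnegative for every $s$, and the quantity $\Delta^H_{\mu_S}(\kappa_S;d_{\mathcal V})$ is simply its average under $\mu_S$. Nonnegativity holds because $d_{\mathcal V}$ is a pseudometric, so $D_H\ge 0$ and hence $e_\kappa\ge0$. Boundedness of $e_\kappa$ is needed for the suprema and expectations to be finite; in the finite case it is automatic. Given these facts, both inequalities are elementary comparisons between an average and a supremum, and I treat them in turn.

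For the first inequality, I bound the integrand pointwise by its supremum: $e_\kappa(S)\le\sup_{s}e_\kappa(s)=\Delta_H(\kappa_S;d_{\mathcal V})$ for every $S$. Taking the expectation over $S\sim\mu_S$ then gives $\Delta^H_{\mu_S}\le\Delta_H$, using monotonicity of expectation and the fact that $\mu_S$ has total mass one. This step uses no finiteness assumption beyond measurability, which is trivial for a finite $\mathcal S$.

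For the second inequality, $\mathcal S$ is finite, so the supremum is attained at some $s^\star$ with $e_\kappa(s^\star)=\Delta_H(\kappa_S;d_{\mathcal V})$. I write the average as a finite sum and drop every term except the one at $s^\star$. Dropping terms is legitimate because each term $\mu_S(s)e_\kappa(s)$ is nonnegative. This gives
\[
\Delta^H_{\mu_S}(\kappa_S;d_{\mathcal V})=\sum_{s}\mu_S(s)e_\kappa(s)\ge\mu_S(s^\star)e_\kappa(s^\star)\ge\mu_{\min}\,\Delta_H(\kappa_S;d_{\mathcal V}).
\]
Dividing by $\mu_{\min}>0$ yields \autoref{eq:worst-via-mu-min}.

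There is no real obstacle here. The one point that needs care is the nonnegativity of $e_\kappa$, which is what allows all but one term of the sum to be discarded; it should be stated explicitly, together with the attainment of the supremum in the finite case.
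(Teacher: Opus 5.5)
Your proposal is correct and follows the paper's proof step for step: you bound the $\mu_S$-average of $e_\kappa$ by its supremum for the first inequality, and for the second you pick a maximiser $s^\star$, drop every other term of the finite sum, and use $\mu_S(s^\star)\ge\mu_{\min}$. Your explicit remark that $e_\kappa\ge 0$, which justifies dropping terms and the final comparison, is a detail the paper leaves implicit and is worth keeping.
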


\begin{proof}
The first inequality follows because an expectation is bounded above by the
supremum of its argument. For the second inequality, choose
$s_*\in\arg\max_s e_\kappa(s)$. Then
\[
\Delta^H_{\mu_S}(\kappa_S;d_{\mathcal V})
=
\sum_s\mu_S(s)e_\kappa(s)
\ge
\mu_S(s_*)e_\kappa(s_*)
\ge
\mu_{\min}\Delta_H(\kappa_S;d_{\mathcal V}).
\]
Rearranging gives \autoref{eq:worst-via-mu-min}.
\end{proof}

The proposition shows that a $\mu$-averaged distortion can 
certify the worst case when $\mu$ assigns sufficient mass to 
each state at which distortion may be large.
Note that the resulting constant may be poor: for a uniform distribution on a finite
state space, \autoref{eq:worst-via-mu-min} gives the factor $|\mathcal S|$.
However, no distribution-free converse is possible without such a coverage condition.

Finally, average distortion \(\overline\Delta_\mu(\nu,g)\) used in
\autoref{eq:mu-decoder-defect} should be read
as a computable surrogate for these averaged defects. It becomes a uniform
certificate only with the corresponding coverage assumptions, and only when the
chosen pair reconstruction controls the Hausdorff state defect relevant to
\autoref{teo:main}. Otherwise, it remains a useful design criterion for building
a family of abstractions.

\section{How rate-distortion generalises abstraction frameworks}
\label{app:RD_generalises}

The rate-distortion formulation presented in \autoref{eq:rd-objective} can be seen as an umbrella construction containing many familiar bisimulation and abstraction constructions as limiting or constrained cases. 
Some of these constructions are discussed in detail in \autoref{app:soft-sa-bisim}.

\begin{enumerate}
\item \emph{Hard abstractions.} If the state and action encoders are
deterministic Dirac kernels
\begin{align}
\nu(\bar s,\bar a\mid s,a)
=\mathbf 1\{f(s,a)=(\bar s,\bar a)\}
\end{align}
for $f(s,a)=\big(f_S(s),f_A(s,a)\big)$ with 
$f_S:\mathcal S\to\bar{\mathcal S}$ and 
$f_A:\mathcal S\times\mathcal A\to\bar{\mathcal A}$, 
then every concrete state $s$ and action $a$ is assigned to
a unique abstract state/action $(\bar s,\bar a)$.
The stochastic rate-distortion problem
then reduces to a hard partition or quotient problem with decoded
representatives. In the zero-distortion case, and with enough abstract symbols,
this is the usual exact quotient/homomorphism setting of stochastic
bisimulation and MDP homomorphisms
\citep{givan2003equivalence,ravindran2004algebraic,taylor2008bounding}.

\item \emph{Approximate hard homomorphisms.} If the encoders are deterministic
but the attained pair distortion is nonzero, then the objective is an
approximate homomorphism problem: each block has a representative decoded state
and decoded actions, and the abstraction error is the averaged
Bellman-compatible mismatch within that block. This is the finite-alphabet,
lossy analogue of approximate MDP homomorphisms and value-preserving
state-action abstractions
\citep{taylor2008bounding,abel2020value,zhao2022continuous}.

\item \emph{Lax bisimulation/action matching.} Setting $\lambda=0$ removes the
penalty for retaining action information inside an abstract state. The action
encoder can then use as much action information as the abstract action alphabet
permits to match state-dependent control roles, so the objective becomes an averaged,
rate-distortion version of the lax-bisimulation geometry. It becomes the lax
bisimulation metric itself only if the distortion is replaced by the
Hausdorff aggregation introduced in \autoref{def:abstraction_error}, and $d_{\mathcal V}$
is chosen to be the corresponding Bellman fixed-point pseudometric (see \autoref{app:soft-sa-bisim}); 
otherwise it is a tractable mean-distortion surrogate built from the same state-action matching idea.

\item \emph{Classical state abstractions.} If actions are not compressed (for
example $\bar{\mathcal A}=\mathcal A$, $\nu_A(\bar a\mid s,a)=\mathbf 1\{\bar a=a\}$
and the action component of $g(\bar s,\bar a)$ is $\bar a$), then the only
learned compression is the state encoder $\nu_S$. In this case, 
the objective reduces to a state-abstraction problem with
action labels preserved, recovering the standard setting studied in state
aggregation, stochastic bisimulation partitions, and the abstraction taxonomy of
\citet{li2006towards}; approximate variants correspond to replacing exact
equivalence by a positive distortion budget
\citep{givan2003equivalence,li2006towards,abel2016near,jiang2018notes}.

\item \emph{Bisimulation clustering for a fixed alphabet.} If $\beta\to\infty$
while the number of abstract states and decoded abstract actions is fixed, the
rate term becomes negligible and the problem becomes minimum-distortion
clustering under a given distortion $d$. With deterministic encoders this is the
state-action analogue of $K$-medoids or partition-around-medoids~\citep{kaufman1990finding}.
If $d$ is a
bisimulation or homomorphism metric, this recovers the usual practice of
clustering states or state-action pairs by behavioural distance; if the alphabet
is large enough to achieve zero distortion, it recovers exact bisimulation
minimisation~\citep{ferns2004metrics,ferns2011bisimulation}.

\item \emph{Deterministic annealing and soft clustering.} Finite $\beta$ is the
soft counterpart of hard bisimulation clustering: blocks are probabilistic,
high-distortion assignments are exponentially suppressed, and increasing
$\beta$ follows a deterministic-annealing path from coarse to fine
abstractions. This is the same optimisation principle behind classical
rate-distortion and information-bottleneck state abstraction, now using a
Bellman-compatible state-action distortion rather than an imitation or
prediction loss.

\end{enumerate}

The formulation also introduces numerous novel settings corresponding to other parameter settings:

\begin{itemize}

\item \emph{Degenerate action collapse is not ordinary state abstraction.} If
instead the action encoder is constant, or each $\bar{\mathcal A}(\bar s)$ has
only one action, the method collapses control choices rather than merely
abstracting states. This can be useful as a deliberately low-capacity controller,
but it should not be identified with classical state abstraction, which normally
keeps the primitive action labels available at every abstract state
\citep{li2006towards,abel2020value}.

\item \emph{Symmetries and action relabellings.} Because $\nu_A$ enters the
distortion term, the objective can learn state-dependent action relabellings and
equivariances. This is the homomorphism intuition behind state-action
abstraction: different concrete actions may play the same behavioural role in
different states, and the abstraction should preserve that role rather than the
literal action name
\citep{ravindran2004algebraic,taylor2008bounding,van2020mdp,zhao2022continuous,panangaden2024policy}.

\item \emph{Adaptive refinement.} Finally, varying $\beta$, the alphabet size,
or the admissible action supports turns static bisimulation clustering into an
adaptive refinement procedure. Classical model minimisation and partition
refinement split until exact behavioural consistency is reached; the present
formulation replaces the hard consistency test by a residual-versus-distortion
scale comparison, allowing the abstraction resolution to track what the learner
can currently estimate.
\end{itemize}

\section{Rate-Distortion Solvers}
\label{app:flat}

Throughout this appendix, write $d$ for the Bellman-compatible pair distortion
$d_{\mathcal V}$. Let $\mathcal G$ denote the set of admissible representative
maps $g:\bar{\mathcal S}\times\bar{\mathcal A}\to\mathcal S\times\mathcal A$,
and let $\mathcal G(\bar s)$ denote the corresponding set of decoder slices
$h:\bar{\mathcal A}(\bar s)\to\mathcal S\times\mathcal A$ at abstract state
$\bar s$.

\subsection{Structured solver}

For any state and action encoders, write their abstract marginals and posterior as
\begin{align}
\pi_{\nu}(\bar s)
&:=
\sum_s \mu_S(s)\nu_S(\bar s\mid s),
\label{eq:ba-pair-marginal}
\\
m_{\nu}(\bar a\mid \bar s)
&:=
\frac{
\sum_{s,a}\mu(s,a)\nu_S(\bar s\mid s)\nu_A(\bar a\mid s,a)
}{
\pi_{\nu}(\bar s)
},
\\
\nu_{\mu_S}(s\mid\bar s)
&:=
\frac{\mu_S(s)\nu_S(\bar s\mid s)}{\pi_{\nu}(\bar s)}.
\end{align}
The conditional marginal and posterior are defined arbitrarily on zero-mass
abstract states. These quantities are recomputed whenever the encoder changes.
Write $\mu_A(\cdot\mid s)$ for the conditional action reference induced by
$\mu$.

For fixed $\beta$, we optimise the objective (\autoref{eq:rd-objective}) using a generalised alternating
Blahut--Arimoto scheme, viewed as alternating minimisation over encoders,
marginals, and decoders
\citep{blahut1972computation,arimoto1972algorithm,csiszar1984information,rose1998deterministic}.
The state-encoder step is the BA update with the current abstract-state
marginal and abstract-action marginal frozen.
Its state cost includes both a
distortion term and the contribution of
$I_\mu(S,A;\bar A\mid\bar S)$. The action-encoder step is the analogous
conditional rate-distortion update for the soft action code. It can be solved
by projected gradient or exponentiated gradient. In the variant where the
action encoder is allowed to condition on the sampled abstract state,
$\nu_A(\bar a\mid s,a,\bar s)$, this subproblem has the usual BA softmax form
with prior $m_\nu(\bar a\mid\bar s)$ and distortion
$d\big((s,a),g^{(n)}(\bar s,\bar a)\big)$. Under the convention of this
paper, where $\nu_A(\bar a\mid s,a)$ is shared across the possible abstract
states, the same step becomes a tied-parameter conditional rate-distortion
update. The decoder step is a blockwise state-action medoid update of the
structured decoder slices $g(\bar s,\cdot)$. See an implementation in \autoref{alg:structured-ba}.

\begin{algorithm}[h!]
\caption{Structured Blahut--Arimoto updates for state-action rate-distortion}
\label{alg:structured-ba}
\begin{algorithmic}[1]
\Require Reference measure $\mu\in\Delta(\mathcal S\times\mathcal A)$, distortion $d$, alphabets $\bar{\mathcal S},\bar{\mathcal A}$, parameters $\beta,\lambda\ge 0$, tolerance $\varepsilon$, initial encoders $\nu_S^{(0)}$, $\nu_A^{(0)}$ and decoder $g^{(0)}$.
\For{$n=0,1,2,\ldots$}
    \State Compute the abstract-state marginal
    \[
    \pi^{(n)}(\bar s)
    =
    \sum_s \mu_S(s)\nu_S^{(n)}(\bar s\mid s).
    \]
    \State Compute the abstract-action marginal
    \[
    m^{(n)}(\bar a\mid\bar s)
    =
    \frac{
    \sum_{s,a}\mu(s,a)\nu_S^{(n)}(\bar s\mid s)
    \nu_A^{(n)}(\bar a\mid s,a)
    }{
    \pi^{(n)}(\bar s)
    }.
    \]

    \State Compute the frozen state cost
    \[
    C_S^{(n)}(s,\bar s)
    =
    \mathbb E_{A\sim\mu_A(\cdot\mid s),\,
    \bar A\sim\nu_A^{(n)}(\cdot\mid s,A)}
    \left[
    \lambda
    \log
    \frac{\nu_A^{(n)}(\bar A\mid s,A)}
    {m^{(n)}(\bar A\mid\bar s)}
    +
    \beta
    d\big((s,A),g^{(n)}(\bar s,\bar A)\big)
    \right].
    \]

    \State Update the state encoder:
    \[
    \nu_S^{(n+1)}(\bar s\mid s)
    =
    \frac{
    \pi^{(n)}(\bar s)
    \exp\{- C_S^{(n)}(s,\bar s)\}
    }{
    \sum_{\bar u}
    \pi^{(n)}(\bar u)
    \exp\{- C_S^{(n)}(s,\bar u)\}
    }.
    \]

    \State Update the action encoder by solving
    \[
    \nu_A^{(n+1)}
    \in
    \arg\min_{\nu_A}
    \left\{
    \lambda I_\mu(S,A;\bar A\mid \bar S)
    +
    \beta\overline\Delta_\mu(\nu_S^{(n+1)},\nu_A,g^{(n)})
    \right\}.
    \]

    \State Refresh the posterior
    \[
    \nu_{\mu_S}^{(n+1)}(s\mid\bar s)
    =
    \frac{\mu_S(s)\nu_S^{(n+1)}(\bar s\mid s)}
    {\sum_u \mu_S(u)\nu_S^{(n+1)}(\bar s\mid u)}.
    \]

    \State Update the decoder blockwise:
    \[
    g^{(n+1)}(\bar s,\cdot)
    \in
    \arg\min_{h\in\mathcal G(\bar s)}
    \sum_{s,a}
    \sum_{\bar a\in\bar{\mathcal A}(\bar s)}
    \mu(s,a)
    \nu_S^{(n+1)}(\bar s\mid s)
    \nu_A^{(n+1)}(\bar a\mid s,a)
    d\big((s,a),h(\bar a)\big).
    \]

    \If{the objective in \eqref{eq:rd-objective} changes by at most $\varepsilon$}
        \State \Return $(\nu_S^{(n+1)},\nu_A^{(n+1)},g^{(n+1)})$
    \EndIf
\EndFor
\end{algorithmic}
\end{algorithm}

\subsection{Simplified flat pairwise solver used in the diagnostics}

With $\lambda=1$ and the joint-encoder simplification described below, the
experiments solve
\begin{equation}
\min_{\nu,\,g\in\mathcal G}
\left\{
I_{\mu}(S,A;\bar S,\bar A)
+
\beta \overline\Delta_\mu(\nu,g)
\right\}.
\end{equation}
Additionally, they introduce the following simplification: instead of optimising $\nu_S$ and $\nu_A$ separately, they focus on $\nu:\mathcal S\times\mathcal A\to \Delta(\bar{\mathcal{S}}\times\bar{\mathcal{A}})$ as a single entity. This allows for a more computationally efficient algorithm, which is described in \autoref{alg:flat-pairwise-ba}.

\begin{algorithm}[h!]
\caption{Flat pairwise Blahut--Arimoto updates used in the diagnostics}
\label{alg:flat-pairwise-ba}
\begin{algorithmic}[1]
\Require Reference measure $\mu\in\Delta(\mathcal S\times\mathcal A)$, distortion $d$, alphabets
$\bar{\mathcal S}$ and $\bar{\mathcal A}$, parameter $\beta\ge 0$, tolerance $\varepsilon$, initial
joint encoder $\nu^{(0)}$ and decoder $g^{(0)}$.
\For{$n=0,1,2,\ldots$}
    \State Compute the abstract marginal
    \[
    \pi^{(n)}(\bar s,\bar a)
    =
    \sum_{s\in\mathcal S}
    \sum_{a\in\mathcal A}
    \mu(s,a)\nu^{(n)}(\bar s,\bar a\mid s,a).
    \]

    \State Compute the frozen pair cost
    \[
    C^{(n)}\big((s,a),(\bar s,\bar a)\big)
    =
    d\big((s,a),g^{(n)}(\bar s,\bar a)\big).
    \]

    \State Update the encoder:
    \[
    \nu^{(n+1)}(\bar s,\bar a\mid s,a)
    =
    \frac{
    \pi^{(n)}(\bar s,\bar a)
    \exp\{-\beta C^{(n)}((s,a),(\bar s,\bar a))\}
    }{
    \sum_{\bar u\in\bar{\mathcal S}}
    \sum_{\bar b\in\bar{\mathcal A}}
    \pi^{(n)}(\bar u,\bar b)
    \exp\{-\beta C^{(n)}((s,a),(\bar u,\bar b))\}
    }.
    \]

    \State Refresh the posterior
    \[
    \nu_{\mu}^{(n+1)}(s,a\mid \bar s,\bar a)
    =
    \frac{
    \mu(s,a)\nu^{(n+1)}(\bar s,\bar a\mid s,a)
    }{
    \sum_{u\in\mathcal S}
    \sum_{b\in\mathcal A}
    \mu(u,b)\nu^{(n+1)}(\bar s,\bar a\mid u,b)
    }.
    \]

    \State Update the decoder:
    \[
    g^{(n+1)}(\bar s,\bar a)
    \in
    \arg\min_{(u,b)\in\mathcal S\times\mathcal A}
    \sum_{s\in\mathcal S}
    \sum_{a\in\mathcal A}
    \nu_{\mu}^{(n+1)}(s,a\mid \bar s,\bar a)
    d\big((s,a),(u,b)\big).
    \]

    \If{the objective changes by at most $\varepsilon$}
        \State \Return $(\nu^{(n+1)},g^{(n+1)})$
    \EndIf
\EndFor
\end{algorithmic}
\end{algorithm}

\section{Experimental details}
\label{app:experiments}

This appendix provides additional information regarding the experiments. 
Code to reproduce the experimental results can be found in \href{https://github.com/ferosas/adaptive-state-action-abstraction}{\textit{github.com/ferosas/adaptive-state-action-abstraction}}.

\subsection{Shared model-based protocol}
\label{app:shared-experiment-protocol}

All experiments in \autoref{sec:experiments} are tabular discounted
MDPs whose transitions and rewards are assumed to be known. 
Thus the transition tensor \(P(\cdot\mid s,a)\), reward function
\(r(s,a)\), Bellman updates, policy evaluation, and state-action distortion
matrices are all computed exactly. No Monte Carlo rollouts are used for the
reported returns. Unless stated otherwise, the discount is \(\gamma=0.95\), the
reference measure in the rate-distortion objective is the uniform measure
\(\mu(s,a)=1/(|\mathcal S||\mathcal A|)\), and the reported abstraction error is
the average decoder distortion
\[
\overline\Delta_\mu(\nu,g)
=
\mathbb E_{(S,A)\sim \mu,\,Z\sim\nu(\cdot\mid S,A)}
\big[d((S,A),g(Z))\big].
\]
The switching rule therefore uses this average distortion proxy, not the
Hausdorff worst-case quantity from \autoref{def:abstraction_error}.

\paragraph{Distortion.}
All experiments build a fixed-point state-action bisimulation pseudometric of the form
\[
d^*\big((s,a),(u,b)\big)
=
|r(s,a)-r(u,b)|
+\gamma W_{d^*}\!\left(P(\cdot\mid s,a),P(\cdot\mid u,b)\right).
\]
Here \(W_{d^*}\) is the finite-state Wasserstein distance with ground
cost \(d^*\). For deterministic domains such as \texttt{Taxi} and
\texttt{DoorKey}, this Wasserstein term reduces to the distance between the
successor states. For stochastic \texttt{SysAdmin}, it is solved exactly by the
dual linear program. \texttt{Four Rooms} uses the same fixed-point object, with
an implementation that exploits the uniform-reset structure to compute it
efficiently.

\paragraph{Rate-distortion fitting.}
The experiments use the flat solver from \autoref{app:flat} with
\(\lambda=1\). Internally this solver fits a single soft encoder
\(\nu(z\mid s,a)\) over concrete state-action pairs and a deterministic
representative \(g(z)\in\mathcal S\times\mathcal A\) for each active code
\(z\). The abstract alphabet cap is set to the full number of concrete
state-action pairs, \(|\mathcal S||\mathcal A|\); after fitting, codes whose
abstract marginal under \(\mu\) is below \(10^{-4}\) are pruned and the encoder
is renormalised. Blahut--Arimoto updates use tolerance \(10^{-6}\). The
per-domain limits on outer and inner alternating updates are given below.

\paragraph{Planning and grounding.}
Planning is performed in \(Q\)-space. The full-resolution baseline starts from
\(Q_0=0\) and applies the concrete optimality update
\[
(FQ)(s,a)
=
r(s,a)
+\gamma\sum_{s'}P(s'| s,a)\max_{a'}Q(s',a').
\]
For a given abstraction, \(\bar Q(\bar s,\bar a)\) can be grounded to concrete state-action pairs via
\[
Q^{\mathrm{gr}}(s,a)
=
\sum_{\bar s,\bar a} \nu(\bar s,\bar a| s,a)\bar Q(\bar s,\bar a),
\]
and the decoder can be used to read the concrete Bellman update back:
\[
(\bar F\bar Q)(\bar s,\bar a)=(FQ^{\mathrm{gr}})(g(\bar s,\bar a)).
\]

For a fixed \(\beta\), traces start from \(\bar Q_0=0\) and repeatedly apply
\(\bar F\). For the adaptive setting, traces start at the coarsest \(\beta\) and 
switch to finner abstractions when the residual
\[
\max_{s\in\mathcal S}
\left|
\max_a Q^{\mathrm{gr}}_{\mathrm{next}}(s,a)
-
\max_a Q^{\mathrm{gr}}(s,a)
\right|
\]
falls below the current \(\overline\Delta_\mu\). When switching, the current
grounded \(Q\)-function is transferred to candidate finner abstractions, 
and the controller skips any later abstraction whose
transferred residual is still below its abstraction error. 

\paragraph{Evaluation.}
The return traces shown in the paper measure the quality of the grounded concrete policy 
against the corresponding number of Bellman updates. At a recorded
checkpoint \(k\), the plotted policy is always the deterministic greedy concrete policy
\[
\pi_k(s)\in\arg\max_{a\in\mathcal A} Q^{\mathrm{gr}}_k(s,a),
\]
with ties broken by the listed action order. The value plotted for this policy
is its exact value in the original concrete MDP. Concretely, we solve
\[
(I-\gamma P^{\pi_k})v^{\pi_k}=r^{\pi_k},
\qquad
r^{\pi_k}(s)=r(s,\pi_k(s)),
\qquad
P^{\pi_k}(s,s')=P(s'\mid s,\pi_k(s)),
\]
and report the uniform state average
\[
\frac{1}{|\mathcal S|}\sum_{s\in\mathcal S}v^{\pi_k}(s),
\]
including the absorbing terminal state when a domain has one.

\paragraph{Information and compression diagnostics.}
All information quantities are computed in bits. For a given abstraction with 
states $\bar s\in\bar{\mathcal S}$ and actions $\bar a\in\bar{\mathcal A}$, 
the effective number of abstract labels is \(2^{I_\mu(S,A;\bar S,\bar A)}\). 
We then report the chain-rule split
\[
I_\mu(S,A;\bar S,\bar A)
=
I_\mu(S,A;\bar S)
+
I_\mu(S,A;\bar A\mid \bar S).
\]
The corresponding information-equivalent sizes are
\(2^{I_\mu(S,A;\bar S)}\) and
\(2^{I_\mu(S,A;\bar A\mid\bar S)}\). These effective sizes are not
the same as the raw alphabet sizes --- they discount unused or imbalanced labels 
and stochastic assignments. 
Rates divide these two effective sizes by \(|\mathcal S|\) and \(|\mathcal A|\),
respectively, and the normalised state-action rate is their product.

\paragraph{Exact bisimulation baselines.}
The state-bisimulation column of \autoref{table1} is computed by exact
partition refinement: states are split until states in the same block have the
same reward and the same transition mass into every current block for every
primitive action. Since this baseline keeps $\bar{\mathcal A}=\mathcal A$, the
normalised state-action rate is
\((\#\text{state equivalence classes})/|\mathcal S|\). The MDP homomorphism column 
is computed from the zero-distance relation of the
fixed-point pair metric \(d\), using connected components of
\(\{((s,a),(u,b)):d((s,a),(u,b))=0\}\) with numerical tolerance
\(10^{-8}\max\{1,\max d\}\). The adaptive point in \autoref{table1} is the
first adaptive checkpoint whose exact grounded return reaches the optimal return.

\subsection{Classic tabular benchmarks}
\label{app:classic-tabular-details}

\paragraph{Four Rooms.}
The \texttt{Four Rooms} experiment uses the classic \(11\times 11\) four-room
layout with four hallways. A state is a pair consisting of the current
cell and the current goal index. The four goal cells are the room centers
\((2,2)\), \((3,8)\), \((8,2)\), and \((8,8)\). 
There are \(104\) playable cells (i.e. discounting walls) 
and four possible goal rooms, giving \(104\times 4=416\) states. The action set is
\(\{\texttt{up},\texttt{right},\texttt{down},\texttt{left}\}\), so
\(|\mathcal S||\mathcal A|=1664\). With probability \(1-\eta\) the
intended move is executed, and with probability \(\eta\) the agent stays in
place; wall collisions also leave the agent in place (experiments use \(\eta=0.10\)). 
If the landing cell is the current goal center, the agent receives reward \(1\) and
the next state is reset uniformly over all cell-goal pairs, including the same
goal. All other rewards are zero. The fixed and adaptive \(\beta\) ladder is
\(\{0,5,10,15,20\}\). The Blahut--Arimoto limit is \(500\) outer iterations and
\(50\) inner iterations per \(\beta\), and the planning budget is \(150\)
full-resolution sweeps with evaluation every sweep.

\paragraph{Taxi.}
The \texttt{Taxi} experiment follows the Taxi-v3 dynamics on the \(5\times 5\)
map with depot locations \(R,G,Y,B\), but adds a single absorbing success state
so that the episodic task is represented as a discounted MDP. Nonterminal states
are \((\text{row},\text{column},\text{passenger location},\text{destination})\),
where the passenger is either at one of the four depots or in the taxi. This
gives \(500\) nonterminal states plus one absorbing state, for \(501\) states
total. The action set is
\(\{\texttt{south},\texttt{north},\texttt{east},\texttt{west},
\texttt{pickup},\texttt{dropoff}\}\), so
\(|\mathcal S||\mathcal A|=3006\). Transitions are deterministic. Each ordinary
step has reward \(-1\), illegal pickup/dropoff has reward \(-10\), and a
successful dropoff has reward \(20\) and transitions to the absorbing state. The
\(\beta\) ladder is
\(\{0.02,0.04,0.06,0.08,0.10\}\). The Blahut--Arimoto limit is \(10\) outer
iterations and \(100\) inner iterations per \(\beta\), and the planning budget
is \(100\) full-resolution sweeps with evaluation every sweep.

\paragraph{DoorKey.}
The \texttt{DoorKey} experiment is a fully observable tabular variant inspired
by MiniGrid. The grid has side length \(5\). An internal wall separates two
rooms; the locked door is at \((2,2)\), the key is at \((1,3)\), and the goal is
at \((3,3)\). The state records the agent position, facing direction, whether
the key is carried, and whether the door is open. Only reachable task phases
are included: key absent with door closed, key carried with door closed, and
key carried with door open, plus one absorbing terminal state. This yields
\(233\) states. The action set is
\(\{\texttt{turn\_left},\texttt{turn\_right},\texttt{forward},
\texttt{pickup},\texttt{toggle}\}\), so \(|\mathcal S||\mathcal A|=1165\).
Transitions are deterministic. The only positive reward is the goal reward,
equal to \(1\), obtained when the agent moves into the goal and enters the
absorbing state. The fixed \(\beta\) ladder is
\(\{6,7,7.5,8,10\}\). The adaptive ladder is the unit-spaced refinement
\(\{6,7,8,9,10\}\). The Blahut--Arimoto limit is \(200\) outer iterations and
\(50\) inner iterations per \(\beta\), and the planning budget is \(100\)
full-resolution sweeps with evaluation every sweep.

\subsection{SysAdmin}
\label{app:sysadmin}

We use a fully observable ring variant of the \texttt{SysAdmin} benchmark
\citep{guestrin2003efficient}. A state \(x\in\{0,1\}^N\) records which machines
are operational, with \(x_i=1\) meaning that machine \(i\) is up. For \(N\ge 3\)
the neighbours of \(i\) are \(i-1\) and \(i+1\) modulo \(N\); for \(N=2\), each
machine has the other machine as its single neighbour. Let \(\mathcal N(i)\)
denote this ring-neighbour set. The action set is
\[
\mathcal A=\{\mathrm{noop},\mathrm{reboot}_1,\ldots,\mathrm{reboot}_N\}.
\]
At each step the controller either does nothing or attempts to reboot one
machine. The reward is the fraction of machines currently up, minus a reboot
cost when a reboot action is used:
\[
r(x,a)
=
\frac{1}{N}\sum_{i=1}^N x_i
-
c_{\mathrm{reboot}}\mathbf 1\{a\neq \mathrm{noop}\}.
\]
The parameters used in all SysAdmin experiments are
\[
p_{\mathrm{base}}=0.95,\qquad
\rho=0.15,\qquad
p_{\mathrm{recover}}=0.05,\qquad
p_{\mathrm{reboot}}=0.95,\qquad
c_{\mathrm{reboot}}=0.2.
\]
Conditional on the current state and action, machines transition independently.
If machine \(i\) is rebooted, then
\(\mathbb P(X'_i=1\mid x,a)=p_{\mathrm{reboot}}\). If it is down and not
rebooted, then
\(\mathbb P(X'_i=1\mid x,a)=p_{\mathrm{recover}}\). If it is up and not
rebooted, then its probability of remaining up decreases by \(\rho\) for each
failed neighbour:
\[
\mathbb P(X'_i=1\mid x,a)
=
\left[
p_{\mathrm{base}}
-
\rho\sum_{j\in\mathcal N(i)}(1-x_j)
\right]_{[0,1]},
\]
where \([\cdot]_{[0,1]}\) denotes clipping to \([0,1]\). Thus the tabular
problem has \(2^N\) states, \(N+1\) actions, and \((N+1)2^N\) state-action
pairs, while the transition law has local ring structure.

\paragraph{Scaling sweep.}
The scaling experiment in \autoref{fig:sysadmin} runs for systems of sizes \(N=2,\ldots,7\). 
For each \(N\), the
full-resolution reference return is the exact value of the final concrete
baseline policy under the same \(100\)-sweep full-resolution compute budget. The
selected checkpoint is the first adaptive checkpoint whose grounded concrete
policy return is at least \(99\%\) of this reference. The star in
\autoref{fig:sysadmin} is then placed at the rate-distortion row whose
\(\beta\) matches that selected checkpoint. Distortion in the scaling plots is
normalised by the maximum entry of the fixed-point state-action metric for that
\(N\), and effective size is reported using the information-equivalent counts
described in \autoref{app:shared-experiment-protocol}.

\paragraph{Factor probes.}
For the factor analysis in \autoref{tab:sysadmin-factor-representatives}, we
load the adaptive encoder at the selected near-optimal checkpoint for each
\(N=2,\ldots,7\). The flat decoder representatives are projected to labels
\(\bar S\) and \(\bar A\) as above. For a discrete state factor \(F(S)\), we
compute \(I(F;\bar S)\). For a discrete action-side factor \(F(S,A)\), we compute
\(I(F;\bar A\mid \bar S)\). Relevance is the same quantity divided by the
entropy of the probed factor, so it lies in \([0,1]\) when the factor has
positive entropy. State probes include global load variables, adjacent-pair and
triplet histograms around the ring, failure-run statistics, isolated-failure
indicators, and counts of vulnerable or critical alive machines. Action probes
include whether the action is \(\mathrm{noop}\) or a reboot, target identity,
target up/down status, wasted reboot indicators, failed-neighbour counts,
target-centred triplet and five-site patterns, and the size of the alive or
failed run containing the target. To avoid reporting many redundant probes, the
table first groups related factors, chooses the strongest raw-mutual-information
factor within each group, and then reports the top four state-side and action-side
representatives averaged across \(N=2,\ldots,7\).

The factors reported in \autoref{tab:sysadmin-factor-representatives} are
defined as follows.
\begin{itemize}
    \item \emph{Triplet hist.} is the
eight-dimensional histogram of cyclic local patterns
\((x_{i-1},x_i,x_{i+1})\), one count for each binary triplet.
    \item \emph{Pair hist.}
    is the four-dimensional histogram of cyclic adjacent patterns
\((x_i,x_{i+1})\), with one count for each binary pair.
    \item \emph{Failed count} is
\(\sum_i(1-x_i)\), the number of down machines.
    \item \emph{Max fail run} is the
length of the longest contiguous cyclic run of down machines. On the action
side, the target of \(\mathrm{reboot}_i\) is machine \(i\), while
\(\mathrm{noop}\) is assigned a separate no-target symbol.
    \item \emph{Target 5-bit}
is the five-site neighbourhood
\((x_{i-2},x_{i-1},x_i,x_{i+1},x_{i+2})\) around the reboot target, with cyclic
indexing.
    \item \emph{Target run size} is the size of the contiguous failed run
containing the reboot target, and is zero when the target machine is up.
    \item \emph{Target index} is the identity of the machine selected for reboot.
    \item \emph{isolated target} indicates whether the reboot target is down while both
of its neighbours are up.
\end{itemize}


\begin{thebibliography}{70}
\providecommand{\natexlab}[1]{#1}
\providecommand{\doi}[1]{doi: #1}

\bibitem[Konidaris(2019)]{konidaris2019necessity}
George Konidaris.
\newblock On the necessity of abstraction.
\newblock \emph{Current opinion in behavioral sciences}, 29:\penalty0 1--7,
  2019.

\bibitem[Ho et~al.(2019)Ho, Abel, Griffiths, and Littman]{ho2019value}
Mark~K Ho, David Abel, Thomas~L Griffiths, and Michael~L Littman.
\newblock The value of abstraction.
\newblock \emph{Current opinion in behavioral sciences}, 29:\penalty0 111--116,
  2019.

\bibitem[Abel(2019)]{abel2019theory}
David Abel.
\newblock A theory of state abstraction for reinforcement learning.
\newblock \emph{Proceedings of the AAAI Conference on Artificial Intelligence},
  33\penalty0 (01):\penalty0 9876--9877, 2019.
\newblock \doi{10.1609/aaai.v33i01.33019876}.

\bibitem[Allen(2023)]{allen2023structured}
Cameron S. Allen.
\newblock \emph{Structured Abstractions for General-Purpose Decision Making}.
\newblock PhD thesis, Brown University, 2023.

\bibitem[Ravindran and Barto(2003)]{ravindran2004algebraic}
Balaraman Ravindran and Andrew~G. Barto.
\newblock An algebraic approach to abstraction in reinforcement learning.
\newblock Technical report, University of Massachusetts Amherst, 2003.

\bibitem[van~der Pol et~al.(2020)van~der Pol, Worrall, van Hoof, Oliehoek, and
  Welling]{van2020mdp}
Elise van~der Pol, Daniel~E. Worrall, Herke van Hoof, Frans~A. Oliehoek, and Max
  Welling.
\newblock MDP homomorphic networks: Group symmetries in reinforcement learning.
\newblock \emph{Advances in Neural Information Processing Systems},
  33:\penalty0 4199--4210, 2020.

\bibitem[Givan et~al.(2003)Givan, Dean, and Greig]{givan2003equivalence}
Robert Givan, Thomas Dean, and Matthew Greig.
\newblock Equivalence notions and model minimization in {M}arkov decision
  processes.
\newblock \emph{Artificial Intelligence}, 147\penalty0 (1--2):\penalty0
  163--223, 2003.

\bibitem[Li et~al.(2006)Li, Walsh, and Littman]{li2006towards}
Lihong Li, Thomas~J. Walsh, and Michael~L. Littman.
\newblock Towards a unified theory of state abstraction for {MDP}s.
\newblock In \emph{Proceedings of the Ninth International Symposium on
  Artificial Intelligence and Mathematics}, 2006.

\bibitem[Ferns et~al.(2004)Ferns, Panangaden, and Precup]{ferns2004metrics}
Norm Ferns, Prakash Panangaden, and Doina Precup.
\newblock Metrics for finite {M}arkov decision processes.
\newblock In \emph{Proceedings of the 20th Conference on Uncertainty in
  Artificial Intelligence}, pages 162--169, 2004.

\bibitem[Ferns et~al.(2011)Ferns, Panangaden, and
  Precup]{ferns2011bisimulation}
Norm Ferns, Prakash Panangaden, and Doina Precup.
\newblock Bisimulation metrics for continuous {M}arkov decision processes.
\newblock \emph{SIAM Journal on Computing}, 40\penalty0 (6):\penalty0
  1662--1714, 2011.

\bibitem[Taylor et~al.(2008)Taylor, Precup, and Panangaden]{taylor2008bounding}
Jonathan~J. Taylor, Doina Precup, and Prakash Panangaden.
\newblock Bounding performance loss in approximate {MDP} homomorphisms.
\newblock In \emph{Advances in Neural Information Processing Systems}, 2008.

\bibitem[Abel et~al.(2016)Abel, Hershkowitz, and Littman]{abel2016near}
David Abel, David Hershkowitz, and Michael~L. Littman.
\newblock Near optimal behavior via approximate state abstraction.
\newblock In \emph{Proceedings of The 33rd International Conference on Machine
  Learning}, volume~48 of \emph{Proceedings of Machine Learning Research},
  pages 2915--2923, 2016.

\bibitem[Shannon(1959)]{shannon1959coding}
Claude~E. Shannon.
\newblock Coding theorems for a discrete source with a fidelity criterion.
\newblock \emph{IRE National Convention Record}, 4:\penalty0 142--163, 1959.

\bibitem[Tishby et~al.(1999)Tishby, Pereira, and Bialek]{tishby2000information}
Naftali Tishby, Fernando~C. Pereira, and William Bialek.
\newblock The information bottleneck method.
\newblock In \emph{The 37th annual Allerton Conference on Communication,
  Control, and Computing}, pages 368--377, 1999.

\bibitem[Ortner(2013)]{ortner2013adaptive}
Ronald Ortner.
\newblock Adaptive aggregation for reinforcement learning in average reward
  {M}arkov decision processes.
\newblock \emph{Annals of Operations Research}, 208:\penalty0 321--336, 2013.

\bibitem[Jiang(2018)]{jiang2018notes}
Nan Jiang.
\newblock Notes on state abstractions.
\newblock Lecture notes, University of Illinois at Urbana-Champaign, 2018.

\bibitem[Clarke et~al.(2003)Clarke, Grumberg, Jha, Lu, and
  Veith]{clarke2000cegar}
Edmund~M. Clarke, Orna Grumberg, Somesh Jha, Yuan Lu, and Helmut Veith.
\newblock Counterexample-guided abstraction refinement for symbolic model
  checking.
\newblock \emph{Journal of the ACM}, 50\penalty0 (5):\penalty0 752--794, 2003.

\bibitem[Abate et~al.(2024)Abate, Giacobbe, and Schnitzer]{abate2024bisimulation}
Alessandro Abate, Mirco Giacobbe, and Yannik Schnitzer.
\newblock Bisimulation learning, 2024.
\newblock arXiv:2405.15723.

\bibitem[Coppola et~al.(2025)Coppola, Schnitzer, Giacobbe, Abate, and
  Mazo~Jr]{coppola2025existence}
Rudi Coppola, Yannik Schnitzer, Mirco Giacobbe, Alessandro Abate, and Manuel
  Mazo~Jr.
\newblock Existence and synthesis of multi-resolution approximate bisimulations
  for continuous-state dynamical systems.
\newblock \emph{arXiv preprint arXiv:2509.17739}, 2025.

\bibitem[Guestrin et~al.(2003)Guestrin, Koller, Parr, and
  Venkataraman]{guestrin2003efficient}
Carlos Guestrin, Daphne Koller, Ronald Parr, and Shobha Venkataraman.
\newblock Efficient solution algorithms for factored MDPs.
\newblock \emph{Journal of Artificial Intelligence Research}, 19:\penalty0
  399--468, 2003.
\newblock \doi{10.1613/jair.1000}.

\bibitem[Abel et~al.(2020)Abel, Umbanhowar, Khetarpal, Arumugam, Precup, and
  Littman]{abel2020value}
David Abel, Nathan Umbanhowar, Khimya Khetarpal, Dilip Arumugam, Doina Precup,
  and Michael~L. Littman.
\newblock Value preserving state-action abstractions.
\newblock In \emph{Proceedings of the 23rd International Conference on
  Artificial Intelligence and Statistics}, volume 108, pages 1639--1650, 2020.

\bibitem[Jiang et~al.(2015)Jiang, Kulesza, and Singh]{jiang2015abstraction}
Nan Jiang, Alex Kulesza, and Satinder Singh.
\newblock Abstraction selection in model-based reinforcement learning.
\newblock In \emph{Proceedings of the 32nd International Conference on Machine
  Learning}, volume~37 of \emph{Proceedings of Machine Learning Research},
  pages 179--188, 2015.

\bibitem[Ferns and Precup(2014)]{ferns2014bisimulation}
Norm Ferns and Doina Precup.
\newblock Bisimulation metrics are optimal value functions.
\newblock In \emph{Proceedings of the 30th Conference on Uncertainty in
  Artificial Intelligence}, pages 210--219, 2014.

\bibitem[Rezaei-Shoshtari et~al.(2022)Rezaei-Shoshtari, Zhao, Panangaden,
  Meger, and Precup]{zhao2022continuous}
Sahand Rezaei-Shoshtari, Rosie Zhao, Prakash Panangaden, David Meger, and Doina
  Precup.
\newblock Continuous {MDP} homomorphisms and homomorphic policy gradient.
\newblock In \emph{Advances in Neural Information Processing Systems}, 2022.

\bibitem[Castro(2020)]{castro2020scalable}
Pablo~Samuel Castro.
\newblock Scalable methods for computing state similarity in deterministic
  {M}arkov decision processes.
\newblock \emph{Proceedings of the AAAI Conference on Artificial Intelligence},
  34\penalty0 (06):\penalty0 10069--10076, 2020.
\newblock \doi{10.1609/aaai.v34i06.6564}.

\bibitem[Panangaden et~al.(2024)Panangaden, Rezaei-Shoshtari, Zhao, Meger, and
  Precup]{panangaden2024policy}
Prakash Panangaden, Sahand Rezaei-Shoshtari, Rosie Zhao, David Meger, and Doina
  Precup.
\newblock Policy gradient methods in the presence of symmetries and state
  abstractions.
\newblock \emph{Journal of Machine Learning Research}, 25:\penalty0 1--57,
  2024.

\bibitem[Gelada et~al.(2019)Gelada, Kumar, Buckman, Nachum, and
  Bellemare]{gelada2019deepmdp}
Carles Gelada, Saurabh Kumar, Jacob Buckman, Ofir Nachum, and Marc~G.
  Bellemare.
\newblock {DeepMDP}: Learning continuous latent space models for representation
  learning.
\newblock In \emph{Proceedings of the 36th International Conference on Machine
  Learning}, volume~97 of \emph{Proceedings of Machine Learning Research},
  pages 2170--2179, 2019.

\bibitem[Zhang et~al.(2021)Zhang, McAllister, Calandra, Gal, and
  Levine]{zhang2021learning}
Amy Zhang, Rowan McAllister, Roberto Calandra, Yarin Gal, and Sergey Levine.
\newblock Learning invariant representations for reinforcement learning without
  reconstruction.
\newblock In \emph{International Conference on Learning Representations}, 2021.

\bibitem[Castro et~al.(2021)Castro, Kastner, Panangaden, and
  Rowland]{castro2021mico}
Pablo~Samuel Castro, Tyler Kastner, Prakash Panangaden, and Mark Rowland.
\newblock {MICo}: Improved representations via sampling-based state similarity
  for {M}arkov decision processes.
\newblock In \emph{Advances in Neural Information Processing Systems}, 2021.

\bibitem[Kemertas and Aumentado-Armstrong(2021)]{kemertas2021towards}
Mete Kemertas and Tristan Aumentado-Armstrong.
\newblock Towards robust bisimulation metric learning.
\newblock In \emph{Advances in Neural Information Processing Systems}, 2021.

\bibitem[Agarwal et~al.(2021)Agarwal, Machado, Castro, and
  Bellemare]{agarwal2021contrastive}
Rishabh Agarwal, Marlos~C. Machado, Pablo~Samuel Castro, and Marc~G. Bellemare.
\newblock Contrastive behavioral similarity embeddings for generalization in
  reinforcement learning.
\newblock In \emph{International Conference on Learning Representations}, 2021.

\bibitem[Abel et~al.(2019)Abel, Arumugam, Asadi, Jinnai, Littman, and
  Wong]{abel2019state}
David Abel, Dilip Arumugam, Kavosh Asadi, Yuu Jinnai, Michael~L. Littman, and
  Lawson L.~S. Wong.
\newblock State abstraction as compression in apprenticeship learning.
\newblock \emph{Proceedings of the AAAI Conference on Artificial Intelligence},
  33\penalty0 (01):\penalty0 3134--3142, 2019.
\newblock \doi{10.1609/aaai.v33i01.33013134}.

\bibitem[Biza et~al.(2021)Biza, Platt, van~de Meent, and
  Wong]{biza2021learning}
Ondrej Biza, Robert Platt, Jan-Willem van~de Meent, and Lawson L.~S. Wong.
\newblock Learning discrete state abstractions with deep variational inference.
\newblock In \emph{Third Symposium on Advances in Approximate Bayesian
  Inference}, 2021.
\bibitem[Delgrange et~al.(2022)Delgrange, Now{\'e}, and
  P{\'e}rez]{delgrange2022distillation}
Florent Delgrange, Ann Now{\'e}, and Guillermo~A. P{\'e}rez.
\newblock Distillation of {RL} policies with formal guarantees via variational
  abstraction of {M}arkov decision processes.
\newblock \emph{Proceedings of the AAAI Conference on Artificial Intelligence},
  36\penalty0 (6):\penalty0 6497--6505, 2022.
\newblock \doi{10.1609/aaai.v36i6.20602}.

\bibitem[Zhu et~al.(2022)Zhu, Huang, Zhang, and Zhu]{xu2022wdibs}
Xianchao Zhu, Tianyi Huang, Ruiyuan Zhang, and William Zhu.
\newblock {WDIBS}: Wasserstein deterministic information bottleneck for state
  abstraction to balance state-compression and performance.
\newblock \emph{Applied Intelligence}, 52\penalty0 (6):\penalty0 6316--6329,
  2022.
\newblock \doi{10.1007/s10489-021-02787-4}.

\bibitem[Goyal et~al.(2019)Goyal, Islam, Strouse, Ahmed, Botvinick, Larochelle,
Bengio, and Levine]{goyal2019infobot}
Anirudh Goyal, Riashat Islam, Daniel Strouse, Zafarali Ahmed, Matthew
  Botvinick, Hugo Larochelle, Yoshua Bengio, and Sergey Levine.
\newblock {InfoBot}: Transfer and exploration via the information bottleneck.
\newblock In \emph{International Conference on Learning Representations}, 2019.

\bibitem[Igl et~al.(2019)Igl, Ciosek, Li, Tschiatschek, Zhang, Devlin, and
  Hofmann]{igl2019generalization}
Maximilian Igl, Kamil Ciosek, Yingzhen Li, Sebastian Tschiatschek, Cheng Zhang,
  Sam Devlin, and Katja Hofmann.
\newblock Generalization in reinforcement learning with selective noise
  injection and information bottleneck.
\newblock In \emph{Advances in Neural Information Processing Systems}, 2019.

\bibitem[Clauw et~al.(2025)Clauw, Polani, and
  Catenacci~Volpi]{clauw2025ibtransfer}
Kenzo Clauw, Daniel Polani, and Nicola Catenacci~Volpi.
\newblock A theoretical analysis of information bottlenecks for zero-shot
  transfer in reinforcement learning, 2025.
\newblock OpenReview preprint, ARLET 2025.

\bibitem[Freed et~al.(2025)Freed, Calandra, Schneider, and
  Choset]{freed2025vibes}
Benjamin Freed, Roberto Calandra, Jeff Schneider, and Howie Choset.
\newblock Distractor-robust reinforcement learning via variational
  bisimulation, 2025.
\newblock OpenReview preprint, submitted to ICLR 2026.

\bibitem[Arumugam and Van~Roy(2021{\natexlab{a}})]{arumugam2021deciding}
Dilip Arumugam and Benjamin Van~Roy.
\newblock Deciding what to learn: A rate-distortion approach.
\newblock In \emph{Proceedings of the 38th International Conference on Machine
  Learning}, volume 139 of \emph{Proceedings of Machine Learning Research},
  pages 373--382, 2021{\natexlab{a}}.

\bibitem[Arumugam and Van~Roy(2021{\natexlab{b}})]{arumugam2021value}
Dilip Arumugam and Benjamin Van~Roy.
\newblock The value of information when deciding what to learn.
\newblock In \emph{Advances in Neural Information Processing Systems},
  volume~34, pages 9816--9827, 2021{\natexlab{b}}.

\bibitem[Arumugam and Van~Roy(2022{\natexlab{a}})]{arumugam2022value}
Dilip Arumugam and Benjamin Van~Roy.
\newblock Deciding what to model: Value-equivalent sampling for reinforcement
  learning.
\newblock In \emph{Advances in Neural Information Processing Systems},
  2022{\natexlab{a}}.

\bibitem[Arumugam and Van~Roy(2022{\natexlab{b}})]{arumugam2022between}
Dilip Arumugam and Benjamin Van~Roy.
\newblock Between rate-distortion theory \& value equivalence in model-based
  reinforcement learning, 2022{\natexlab{b}}.
\newblock Accepted to the Multi-Disciplinary Conference on Reinforcement
  Learning and Decision Making (RLDM) 2022.

\bibitem[Arumugam et~al.(2022)Arumugam, Ho, Goodman, and
  Van~Roy]{arumugam2022rate}
Dilip Arumugam, Mark~K. Ho, Noah~D. Goodman, and Benjamin Van~Roy.
\newblock On rate-distortion theory in capacity-limited cognition \&
  reinforcement learning, 2022.
\newblock NeurIPS 2022 Workshop on Information-Theoretic Principles in
  Cognitive Systems.

\bibitem[Polani(2009)]{polani2009information}
Daniel Polani.
\newblock Information: Currency of life?
\newblock \emph{HFSP Journal}, 3\penalty0 (5):\penalty0 307--316, 2009.

\bibitem[Tishby and Polani(2011)]{tishby2011information}
Naftali Tishby and Daniel Polani.
\newblock Information theory of decisions and actions.
\newblock In \emph{Perception-Action Cycle}, pages 601--636. Springer, 2011.

\bibitem[Rubin et~al.(2012)Rubin, Shamir, and Tishby]{rubin2012trading}
Jonathan Rubin, Ohad Shamir, and Naftali Tishby.
\newblock Trading value and information in {MDP}s.
\newblock In \emph{Decision Making with Imperfect Decision Makers}, pages
  57--74. Springer, 2012.

\bibitem[Dean et~al.(1997)Dean, Givan, and Leach]{dean1997model}
Thomas Dean, Robert Givan, and Sonia Leach.
\newblock Model reduction techniques for computing approximately optimal
  solutions for {M}arkov decision processes.
\newblock In \emph{Proceedings of the 13th Conference on Uncertainty in
  Artificial Intelligence}, pages 124--131, 1997.

\bibitem[Sutton and Barto(2018)]{SuttonBarto2018}
Richard~S. Sutton and Andrew~G. Barto.
\newblock \emph{Reinforcement Learning: An Introduction}.
\newblock MIT Press, 2nd edition, 2018.

\bibitem[Taylor(2008)]{taylor2008lax}
Jonathan Taylor.
\newblock \emph{Lax probabilistic bisimulation}.
\newblock PhD thesis, McGill University, 2008.

\bibitem[Blahut(1972)]{blahut1972computation}
Richard~E. Blahut.
\newblock Computation of channel capacity and rate-distortion functions.
\newblock \emph{IEEE Transactions on Information Theory}, 18\penalty0
  (4):\penalty0 460--473, 1972.

\bibitem[Arimoto(1972)]{arimoto1972algorithm}
Suguru Arimoto.
\newblock An algorithm for computing the capacity of arbitrary discrete
  memoryless channels.
\newblock \emph{IEEE Transactions on Information Theory}, 18\penalty0
  (1):\penalty0 14--20, 1972.

\bibitem[Slonim and Tishby(1999)]{slonim1999agglomerative}
Noam Slonim and Naftali Tishby.
\newblock Agglomerative information bottleneck.
\newblock \emph{Advances in neural information processing systems}, 12, 1999.

\bibitem[Sutton et~al.(1999)Sutton, Precup, and Singh]{sutton1999between}
Richard~S. Sutton, Doina Precup, and Satinder Singh.
\newblock Between {MDP}s and semi-{MDP}s: A framework for temporal abstraction
  in reinforcement learning.
\newblock \emph{Artificial Intelligence}, 112\penalty0 (1--2):\penalty0
  181--211, 1999.

\bibitem[Dietterich(2000)]{dietterich2000hierarchical}
Thomas~G Dietterich.
\newblock Hierarchical reinforcement learning with the MAXQ value function
  decomposition.
\newblock \emph{Journal of artificial intelligence research}, 13:\penalty0
  227--303, 2000.

\bibitem[Chevalier-Boisvert et~al.(2023)Chevalier-Boisvert, Dai, Towers,
  de Lazcano, Willems, Lahlou, Pal, Castro, and
  Terry]{chevalier2023minigrid}
Maxime Chevalier-Boisvert, Bolun Dai, Mark Towers, Rodrigo de Lazcano, Lucas
  Willems, Salem Lahlou, Suman Pal, Pablo~Samuel Castro, and Jordan Terry.
\newblock Minigrid \& Miniworld: Modular \& customizable reinforcement learning
  environments for goal-oriented tasks.
\newblock \emph{Advances in Neural Information Processing Systems},
  36:\penalty0 73383--73394, 2023.

\bibitem[Bacon et~al.(2017)Bacon, Harb, and Precup]{bacon2017optioncritic}
Pierre-Luc Bacon, Jean Harb, and Doina Precup.
\newblock The option-critic architecture.
\newblock In \emph{Proceedings of the 31st AAAI Conference on Artificial
  Intelligence}, 2017.

\bibitem[Valle-Perez et~al.(2019)Valle-Perez, Camargo, and
  Louis]{valle2018deep}
Guillermo Valle-P{\'e}rez, Chico~Q Camargo, and Ard~A Louis.
\newblock Deep learning generalizes because the parameter-function map is
  biased towards simple functions.
\newblock In \emph{International Conference on Learning Representations}, 2019.
\newblock arXiv:1805.08522.

\bibitem[Huh et~al.(2021)Huh, Mobahi, Zhang, Cheung, Agrawal, and
  Isola]{huh2021low}
Minyoung Huh, Hossein Mobahi, Richard Zhang, Brian Cheung, Pulkit Agrawal, and
  Phillip Isola.
\newblock The low-rank simplicity bias in deep networks, 2021.
\bibitem[Shai et~al.(2026)Shai, Amdahl-Culleton, Christensen, Bigelow, Rosas,
  Boyd, Alt, Ray, and Riechers]{shai2026transformers}
Adam Shai, Loren Amdahl-Culleton, Casper~L. Christensen, Henry~R. Bigelow,
  Fernando~E. Rosas, Alexander~B. Boyd, Eric~A. Alt, Kyle~J. Ray, and Paul~M.
  Riechers.
\newblock Transformers learn factored representations, 2026.
\bibitem[Rosch et~al.(1976)Rosch, Mervis, Gray, Johnson, and
  Boyes-Braem]{rosch1976basic}
Eleanor Rosch, Carolyn~B Mervis, Wayne~D Gray, David~M Johnson, and Penny
  Boyes-Braem.
\newblock Basic objects in natural categories.
\newblock \emph{Cognitive psychology}, 8\penalty0 (3):\penalty0 382--439, 1976.

\bibitem[Newport(1990)]{newport1990maturational}
Elissa~L Newport.
\newblock Maturational constraints on language learning.
\newblock \emph{Cognitive science}, 14\penalty0 (1):\penalty0 11--28, 1990.

\bibitem[Adolph(2008)]{adolph2008learning}
Karen~E Adolph.
\newblock Learning to move.
\newblock \emph{Current directions in psychological science}, 17\penalty0
  (3):\penalty0 213--218, 2008.

\bibitem[Kemeny and Snell(1960)]{kemeny1960finite}
John~G. Kemeny and J.~Laurie Snell.
\newblock \emph{Finite Markov Chains}.
\newblock D. Van Nostrand, Princeton, NJ, 1960.

\bibitem[Zang et~al.(2023)Zang, Li, Zhang, Liu, Sun, Islam, Tachet~des Combes,
  and Laroche]{liao2023pitfalls}
Hongyu Zang, Xin Li, Leiji Zhang, Yang Liu, Baigui Sun, Riashat Islam, R{\'e}mi
  Tachet~des Combes, and Romain Laroche.
\newblock Understanding and addressing the pitfalls of bisimulation-based
  representations in offline reinforcement learning.
\newblock In \emph{Advances in Neural Information Processing Systems},
  volume~36, 2023.

\bibitem[Kaufman and Rousseeuw(1990)]{kaufman1990finding}
Leonard Kaufman and Peter~J. Rousseeuw.
\newblock \emph{Finding Groups in Data: An Introduction to Cluster Analysis}.
\newblock Wiley, 1990.

\bibitem[Lloyd(1982)]{lloyd1982least}
Stuart~P. Lloyd.
\newblock Least squares quantization in {PCM}.
\newblock \emph{IEEE Transactions on Information Theory}, 28\penalty0
  (2):\penalty0 129--137, 1982.

\bibitem[Rose(1998)]{rose1998deterministic}
Kenneth Rose.
\newblock Deterministic annealing for clustering, compression, classification,
  regression, and related optimization problems.
\newblock \emph{Proceedings of the IEEE}, 86\penalty0 (11):\penalty0
  2210--2239, 1998.

\bibitem[Csisz{\'a}r and Tusn{\'a}dy(1984)]{csiszar1984information}
Imre Csisz{\'a}r and G{\'a}bor Tusn{\'a}dy.
\newblock Information geometry and alternating minimization procedures.
\newblock \emph{Statistics and Decisions, Supplement Issue}, 1:\penalty0
  205--237, 1984.

\end{thebibliography}
\end{document}